\documentclass[sigconf]{acmart}
\AtBeginDocument{%
  }

\copyrightyear{2026}
\acmYear{2026}
\setcopyright{cc}
\setcctype{by}
\acmConference[KDD '26]{Proceedings of the 32nd ACM SIGKDD Conference on Knowledge Discovery and Data Mining V.2}{August 09--13, 2026}{Jeju Island, Republic of Korea}
\acmBooktitle{Proceedings of the 32nd ACM SIGKDD Conference on Knowledge Discovery and Data Mining V.2 (KDD '26), August 09--13, 2026, Jeju Island, Republic of Korea}
\acmDOI{10.1145/3770855.3817664}
\acmISBN{979-8-4007-2259-2/2026/08}

\usepackage{xspace}
\usepackage{enumitem}
\usepackage[normalem]{ulem}
\usepackage{makecell}
\usepackage{hyperref}
\usepackage[table]{xcolor}
\usepackage[hang, flushmargin]{footmisc}
\usepackage{xcolor} % 只改字体颜色，不改背景
\newcommand{\best}[1]{\normalsize \textbf{#1}}
\usepackage{subcaption}
\usepackage{prompt_style}
\usepackage{balance}
\newcommand{\std}[2]{#1 \scalebox{0.8}{$\pm$ #2}}

\begin{document}

%%
%% The "title" command has an optional parameter,
%% allowing the author to define a "short title" to be used in page headers.
% \title{\sys: Collaborating LLM-based Feature Engineering and Bayesian Hyperparameter Optimization for Robust AutoML}

\title{\sys: LLM-driven Feature Engineering Empowered by Collaborative Bayesian Hyperparameter Optimization}

%%
%% The "author" command and its associated commands are used to define
%% the authors and their affiliations.
%% Of note is the shared affiliation of the first two authors, and the
%% "authornote" and "authornotemark" commands
%% used to denote shared contribution to the research.
\author{Beicheng Xu}
\authornote{Both authors contributed equally to this research.}
\affiliation{%
  \institution{School of CS \& Key Lab of High Confidence Software Technologies (MOE), Peking University}
  \city{Beijing}
  % \state{Ohio}
  \country{China}
}
\email{beichengxu@stu.pku.edu.cn}

\author{Keyao Ding}
\authornotemark[1]
\affiliation{%
  \institution{School of CS \& Key Lab of High Confidence Software Technologies (MOE), Peking University}
  \city{Beijing}
  % \state{Ohio}
  \country{China}
}
\email{maodeshi@stu.pku.edu.cn}

\author{Wei Liu}
\affiliation{%
  \institution{School of CS \& Key Lab of High Confidence Software Technologies (MOE), Peking University}
  \city{Beijing}
  \country{China}
}
\email{eularioal@stu.pku.edu.cn}

\author{Yupeng Lu}
\affiliation{%
  \institution{School of CS \& Key Lab of High Confidence Software Technologies (MOE), Peking University}
  \city{Beijing}
  \country{China}
}
\email{xinkelyp@pku.edu.cn}

\author{Bin Cui}
% \authornote{Corresponding author.}
\affiliation{%
  \institution{School of CS \& Beijing Key Laboratory of Software and Hardware Cooperative Artificial Intelligence Systems, Peking University}
  \city{Beijing}
  \country{China}
}
% \affiliation{%
%   \institution{}
%   \city{Beijing}
%   % \state{Ohio}
%   \country{China}
% }
\email{bin.cui@pku.edu.cn}

\newcommand{\sys}{CoFEH\xspace}

%%
%% The abstract is a short summary of the work to be presented in the
%% article.
\begin{abstract}
Feature Engineering (FE) is pivotal in automated machine learning (AutoML) but remains a bottleneck for traditional methods, which operate within rigid search spaces and lack domain awareness.
While Large Language Models (LLMs) offer a promising alternative to generate unbounded operators with semantic reasoning, existing methods focus on isolated subtasks such as feature generation, falling short of free-form FE pipelines. 
Moreover, they are rarely coupled with hyperparameter optimization (HPO) of the downstream ML model, leading to greedy “FE-then-HPO” workflows that cannot capture strong FE–HPO interactions.
In this paper, we present \sys, a collaborative framework that interleaves LLM-based FE and Bayesian HPO for robust end-to-end AutoML.
CoFEH uses an LLM-driven FE optimizer powered by Tree of Thought (TOT) to explore flexible FE pipelines, a Bayesian optimization (BO) module to solve HPO, 
and a dynamic optimizer selector that adaptively interleaves FE and HPO steps.
Crucially, we introduce a mutual conditioning mechanism that shares context between LLM and BO, enabling mutually informed decisions.
% Experiments show that \sys not only outperforms traditional and LLM-based FE baselines, but also achieves superior end-to-end performance under joint optimization.
Experiments show that \sys outperforms both traditional and LLM-based baselines in both standalone FE and joint FE+HPO settings.
% Our implementation is available at \url{https://github.com/PKU-DAIR/cofeh}.
% An extended version with full appendices is available at \url{https://arxiv.org/abs/2602.09851}.
\end{abstract}

%%
%% The code below is generated by the tool at http://dl.acm.org/ccs.cfm.
%% Please copy and paste the code instead of the example below.
%%
\begin{CCSXML}
<ccs2012>
   <concept>
       <concept_id>10010147.10010257.10010321</concept_id>
       <concept_desc>Computing methodologies~Machine learning algorithms</concept_desc>
       <concept_significance>300</concept_significance>
       </concept>
   <concept>
       <concept_id>10010147.10010178.10010205</concept_id>
       <concept_desc>Computing methodologies~Search methodologies</concept_desc>
       <concept_significance>500</concept_significance>
       </concept>
 </ccs2012>
\end{CCSXML}

\ccsdesc[500]{Computing methodologies~Search methodologies}
\ccsdesc[300]{Computing methodologies~Machine learning algorithms}

%%
%% Keywords. The author(s) should pick words that accurately describe
%% the work being presented. Separate the keywords with commas.
\keywords{Feature Engineering, Large Language Models, Joint Optimization}
%% A "teaser" image appears between the author and affiliation
%% information and the body of the document, and typically spans the
%% page.
% \begin{teaserfigure}
%   \includegraphics[width=\textwidth]{images/LLMFE_workflow.pdf}
%   \caption{Comparison of optimization workflows: existing methods vs. \sys.}
%   \Description{Optimization Pipeline of Existing Methods and \sys.}
%   \label{fig:teaser}
% \end{teaserfigure}

% \received{20 February 2007}
% \received[revised]{12 March 2009}
% \received[accepted]{5 June 2009}

%%
%% This command processes the author and affiliation and title
%% information and builds the first part of the formatted document.
\maketitle
\newcommand\kddavailabilityurl{https://doi.org/xxxx}
\ifdefempty{\kddavailabilityurl}{}{
\begingroup\small\noindent\raggedright\textbf{Resource Availability:}\\
% please change the following context to include multiple artifacts if necessary, including data, models, code, etc.
The source code is available at \url{https://doi.org/10.5281/zenodo.20323800}.
Extended version with full appendices: \url{https://arxiv.org/pdf/2602.09851}.
\endgroup
}

\section{Introduction}

\begin{figure*}
  \includegraphics[width=\textwidth]{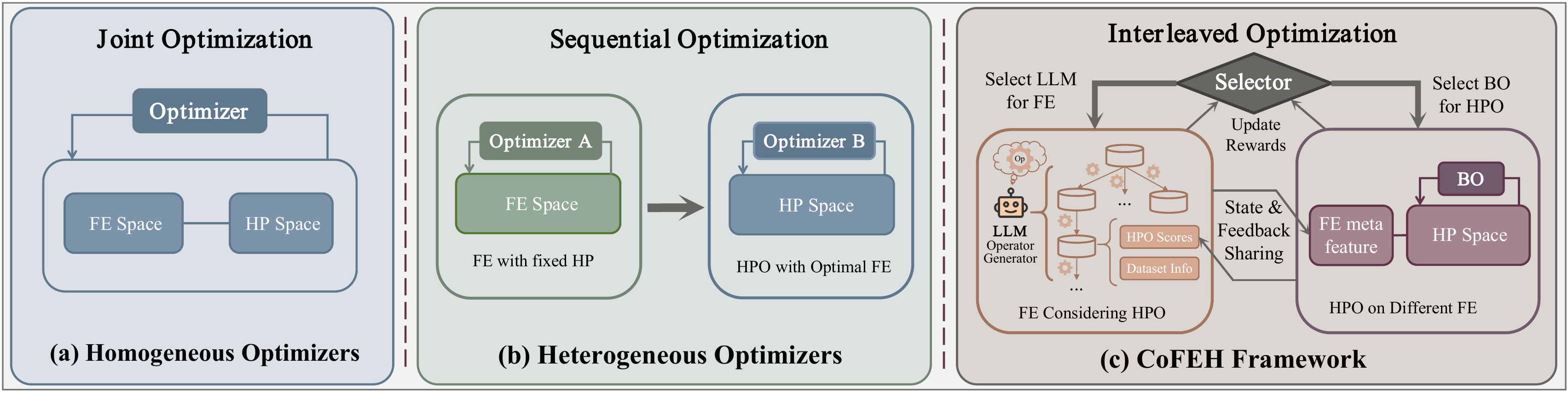}
  \caption{Comparison of optimization workflows: existing methods vs. \sys.}
  \Description{Optimization Pipeline of Existing Methods and \sys.}
  \label{fig:teaser}
% \vspace{-0.5em}
\end{figure*}

The success of machine learning (ML) hinges on the synergy between data representation and model capacity~\cite{mlsurvey_jordan2015machine}. 
Therefore, Feature Engineering (FE) serves as the cornerstone of an ML pipeline, directly influencing ML models’ performance and effectiveness.
Broadly defined, FE constitutes a holistic transformation process that bridges raw data and model inputs, encompassing preprocessing (e.g., imputation, normalization), transformation (e.g., encoding, discretization), generation (synthesizing new interactions), and selection (filtering irrelevant signals)~\cite{fesurvey_mumuni2025automated,zoppi2025strategy}.
However, manual FE is labor-intensive and demands deep domain expertise.
To lower barriers and streamline deployment, the AutoML community has proposed several methods to automate the FE process~\cite{automlsurvey_yao2018taking,openfe_zhang2023openfe,fe4_horn2019autofeat,fe1_katz2016explorekit,fe2_kaul2017autolearn,fe3_khurana2018feature}.
Many end-to-end AutoML systems also incorporate FE by casting it as a search problem over a predefined space of transformation operations and pipeline templates~\cite{autoweka_thornton2013,auto-sklearn_feurer2022auto,rb_li2020efficient,h2o_ledell2020h2o}.
Leveraging an optimizer like Bayesian optimization (BO), they explore finite configurations to construct FE pipelines.

Despite facilitating automation, these traditional approaches come with three fundamental drawbacks: 
(i) They are semantics-agnostic and depend on expensive trial-and-error without leveraging domain priors; 
(ii) They enforce rigid FE pipeline templates, often restricting pipelines to fixed sequences (e.g., generation followed by selection~\cite{fe4_horn2019autofeat,openfe_zhang2023openfe}) or isolated sub-tasks, precluding dynamic interleaving or flexible composition; 
(iii) They restrict the search to closed operation libraries (e.g., a small set of built-in preprocessing algorithms or basic arithmetic), which prevents discovering task-specific operations beyond predefined primitives.

To address these gaps, Large Language Models (LLMs) have been introduced to automated FE~\cite{ellm_gong2025evolutionary,lmpriors_choi2022lmpriors,llmfe-preprocessor1_wang2025data}.
Through domain knowledge and code generation capabilities~\cite{chen2025coderankeval}, LLMs can propose semantically grounded operations and bespoke logic beyond fixed operation libraries, alleviating drawbacks (i) and (iii) mentioned above.
However, most existing methods target only a single FE subtask—most notably feature generation~\cite{caafe_hollmann2023large,octree_nam2024optimized,ellm_gong2025evolutionary,lfg_10.24963/ijcai.2025/782,llmfe_abhyankar2025llm}—resulting in a homogeneous and simplistic FE pipeline. This leads to our first challenge:
% \emph{\textbf{C1: Free-form FE.} How can we empower LLMs to construct a truly free-form, human-expert-like FE pipeline that flexibly composes heterogeneous operations (preprocessing, transformation, generation, and selection) rather than only homogeneous ones?}
\emph{\textbf{C1: Free-form FE.} How can we empower LLMs to construct a truly free-form FE pipeline that flexibly composes heterogeneous operations (preprocessing, transformation, generation, and selection)?}

While LLMs are inherently well-suited for FE, existing methods overlook the critical synergy with Hyperparameter Optimization (HPO) of the downstream ML model. 
In modern AutoML, HPO is predominantly driven by Bayesian optimization (BO) due to its sample efficiency and uncertainty modeling~\cite{divbo_shen2022divbo,optdivbo_poduval2024cash,rb_li2020efficient}. 
Consequently, most LLM-based FE methods that aim for end-to-end performance must attach a BO-based HPO module. 
Because the two optimizers are heterogeneous and operate over differently represented spaces, they typically fall back to a greedy sequential routine—first optimizing FE under a fixed model, then tuning hyperparameters on the frozen features (Fig.~\ref{fig:teaser}(b))~\cite{caafe_hollmann2023large,octree_nam2024optimized,ellm_gong2025evolutionary,autokaggle_li2024autokaggle}.
This greedy strategy ignores the strong dependency between feature representation and model capacity. 
In contrast, traditional methods (e.g., Auto-sklearn~\cite{auto-sklearn_feurer2022auto}) naturally enable joint optimization by enforcing a homogeneous search space (Fig.~\ref{fig:teaser}(a)). 
This creates a ``capability-integration paradox'': LLMs excel at FE exploration but lag behind traditional methods in FE--HPO coupling.
To couple LLM-based FE with BO-based HPO, we advocate an \textbf{interleaved} optimization, which raises two further challenges:
\emph{\textbf{C2: Collaborative FE--HPO optimization.} How can we jointly optimize LLM-based FE and HPO in a coupled manner, rather than optimizing each in isolation?}
and
\emph{\textbf{C3: Task-adaptive scheduling.} How can we allocate budget between FE and HPO according to their task-specific utility?}

To address these challenges, we propose CoFEH (Fig.~\ref{fig:teaser}(c)), a novel framework for holistic ML pipeline optimization. CoFEH is designed to unleash the full potential of LLMs in crafting free-form, human-expert-like FE pipelines, while coupling this with a state-of-the-art BO–based HPO. Our contributions are as follows:
(i) For \textbf{C1}, we propose an LLM-driven tree-of-thought FE pipeline optimizer to explore unconstrained pipeline topologies and operators, achieving truly free-form feature engineering.
(ii)  For \textbf{C2}, we introduce a mutual conditioning mechanism, which establishes a bidirectional information flow between the LLM and BO, enabling them to make decisions conditioned on each other to avoid isolated, sub-optimal tuning.
(iii) For \textbf{C3}, we introduce a PUCB-based dynamic optimizer selector that adaptively allocates the optimization budget between FE and HPO, facilitating efficient interleaved optimization.
(iv) Experiments on 28 public datasets show that CoFEH outperforms state-of-the-art traditional and LLM-based baselines in both standalone FE and end-to-end pipeline optimization.
% \begin{itemize}[leftmargin=2em, topsep=5pt, partopsep=5pt, itemsep=3pt, parsep=0pt]
%  \item For \textbf{C1}, we propose an LLM-driven tree-of-thought FE pipeline optimizer to explore unconstrained pipeline topologies and operations, achieving truly free-form feature engineering.
%  \item  For \textbf{C2}, we introduce a mutual conditioning mechanism, which establishes a bidirectional information flow between the LLM and BO, enabling them to make decisions conditioned on each other to avoid isolated, sub-optimal tuning.
%  \item For \textbf{C3}, we introduce a PUCB-based dynamic optimizer selector that adaptively allocates the optimization budget between FE and HPO, facilitating efficient interleaved optimization.
%  \item Experiments on 28 public datasets show that CoFEH outperforms state-of-the-art traditional and LLM-based baselines in both standalone FE and end-to-end pipeline optimization.

% \end{itemize}
\section{Background and Motivation}

\subsection{The Formal Machine Learning Pipeline}

The standard supervised machine learning pipeline can be formalized as a hierarchical optimization problem. Given a dataset $\mathcal{D} = \{(\mathbf{x}_i, y_i)\}_{i=1}^N$ where $\mathbf{x}_i \in \mathcal{X}$ represents the raw input space and $y_i \in \mathcal{Y}$ the target, the objective is to learn a mapping from raw data to target that minimizes the generalization error. This process is typically decomposed into two interconnected sub-problems: \textbf{Feature Engineering (FE)}~\cite{fe1_katz2016explorekit,fe2_kaul2017autolearn,fe3_khurana2018feature,fe4_horn2019autofeat,li2020you} of data and \textbf{Hyperparameter Optimization (HPO)}~\cite{tpe_bergstra2011algorithms,bohb_falkner2018bohb,kallab2024towards,hpo2_hu2019multi} of ML model. 

To automate pipeline construction, traditional AutoML systems cast the entire process into a black-box optimization framework, and leverage method like BO~\cite{autoweka_thornton2013,auto-sklearn_feurer2022auto,rb_li2020efficient,xu2026pseo}, genetic programming~\cite{tpot_olson2016tpot}, reinforcement learning~\cite{AlphaD3M_drori2021alphad3m}, and ADMM~\cite{admm_liu2020admm} to navigate the search space.
% Among them, Bayesian Optimization (BO)~\cite{gp_snoek2012practical,smac_hutter2011sequential} has emerged as the dominant paradigm, serving as the cornerstone for leading systems such as Auto-WEKA~\cite{autoweka_thornton2013}, Auto-sklearn~\cite{auto-sklearn_feurer2022auto}, MindWare~\cite{rb_li2020efficient}, and H2O~\cite{h2o_ledell2020h2o}.
% In addition, genetic programming~\cite{tpot_olson2016tpot}, reinforcement learning~\cite{AlphaD3M_drori2021alphad3m}, and ADMM-based methods~\cite{admm_liu2020admm} are also used to optimize the pipeline.
Complementing the traditional approaches, Large Language Models (LLMs) have introduced a knowledge-driven paradigm, acting as expert-like agents that propose task-adaptive feature transformation and model configurations~\cite{text2ml_xu2024large,caafe_hollmann2023large,agenthpo_liu2024large,autokaggle_li2024autokaggle,mlmaster_liu2025ml,autom3l_luo2024autom3l}. 
% This evolution enables a transition from purely statistical black-box optimization to knowledge-informed proposal generation.
% Recent systems such as AutoKaggle~\cite{autokaggle_li2024autokaggle}, ML-Master~\cite{mlmaster_liu2025ml}, and AutoM$^3$L~\cite{autom3l_luo2024autom3l} explore this direction by using LLMs as optimizers or controllers to generate and refine pipelines, shifting from purely statistical black-box optimization to knowledge-informed proposal generation.

In the remainder of this section, we analyze prior work from the perspectives of FE and HPO, and discuss the key challenges and opportunities in jointly optimizing them within a unified framework.

\subsection{Feature Engineering}
\label{sec:background_fe}
% We define FE broadly to encompass the entire transformation pipeline from raw data to the model input space. 
We define FE in a broad sense, covering the entire process from raw data to the model input space.
Let $\mathcal{T}$  denote the space of typed \textbf{feature operations}, encompassing data preprocessing, transformations, feature generation, and selection, etc. 
% Let $\mathcal{T}$  denote the space of typed feature operators, encompassing data preprocessing (e.g., imputation and scaling), representation transformations (e.g., encoding and discretization), feature generation (e.g., non-linear transforms and interactions), and feature selection (e.g., importance ranking, statistical significance), etc. 
Consequently, an FE pipeline can be expressed as a composition of $k$ operations,
\begin{equation}
\label{eq:fe_pipeline}
    T = t_k \circ t_{k-1} \circ \dots \circ t_1,\quad t_j \in \mathcal{T}.
\end{equation}
The ultimate goal of this FE pipeline is to project the raw input space $\mathcal{X}$ into an optimized latent space $\mathcal{X}' = T(\mathcal{X})$ for the downstream ML model. 
Ideally, feature engineering is a creative and semantics-intensive process driven by domain expertise. As illustrated in the bottom panel of Fig.~\ref{fig:fe_pipeline}, human experts do not adhere to a rigid template. Instead, they leverage prior knowledge and intuition to navigate an unbounded search space to build a free-form pipeline.

However, such an unbounded process remains beyond the reach of traditional optimization algorithms (e.g., BO), which necessitate well-defined, finite search spaces. Consequently, existing AutoML systems typically impose severe structural constraints to make the problem tractable~\cite{tpot_olson2016tpot,auto-sklearn_feurer2022auto,autoweka_thornton2013}. As depicted in the upper panels of Fig.~\ref{fig:fe_pipeline} (e.g., MindWare~\cite{rb_li2020efficient}, OpenFE~\cite{openfe_zhang2023openfe}), these simplifications result in three limitations:
(i)  \textbf{Semantic agnosticism:} Lacking the capacity to incorporate domain knowledge, these systems rely exclusively on brute-force, data-driven trial-and-error.
(ii) \textbf{Rigid topology:} The pipeline structures are ossified into limited and fixed sequences, preventing dynamic re-ordering or recursion. For instance, MindWare enforces a strict four-stage workflow, while OpenFE is confined to a generation-selection pipeline.
(iii) \textbf{Restricted operation set:} The search is confined to a closed library of predefined algorithms or mathematical primitives (e.g., $+,-,\times$), precluding novel operations beyond the system's hard-coded scope.

\begin{figure}[t!]
  \centering
  \includegraphics[width=\linewidth]{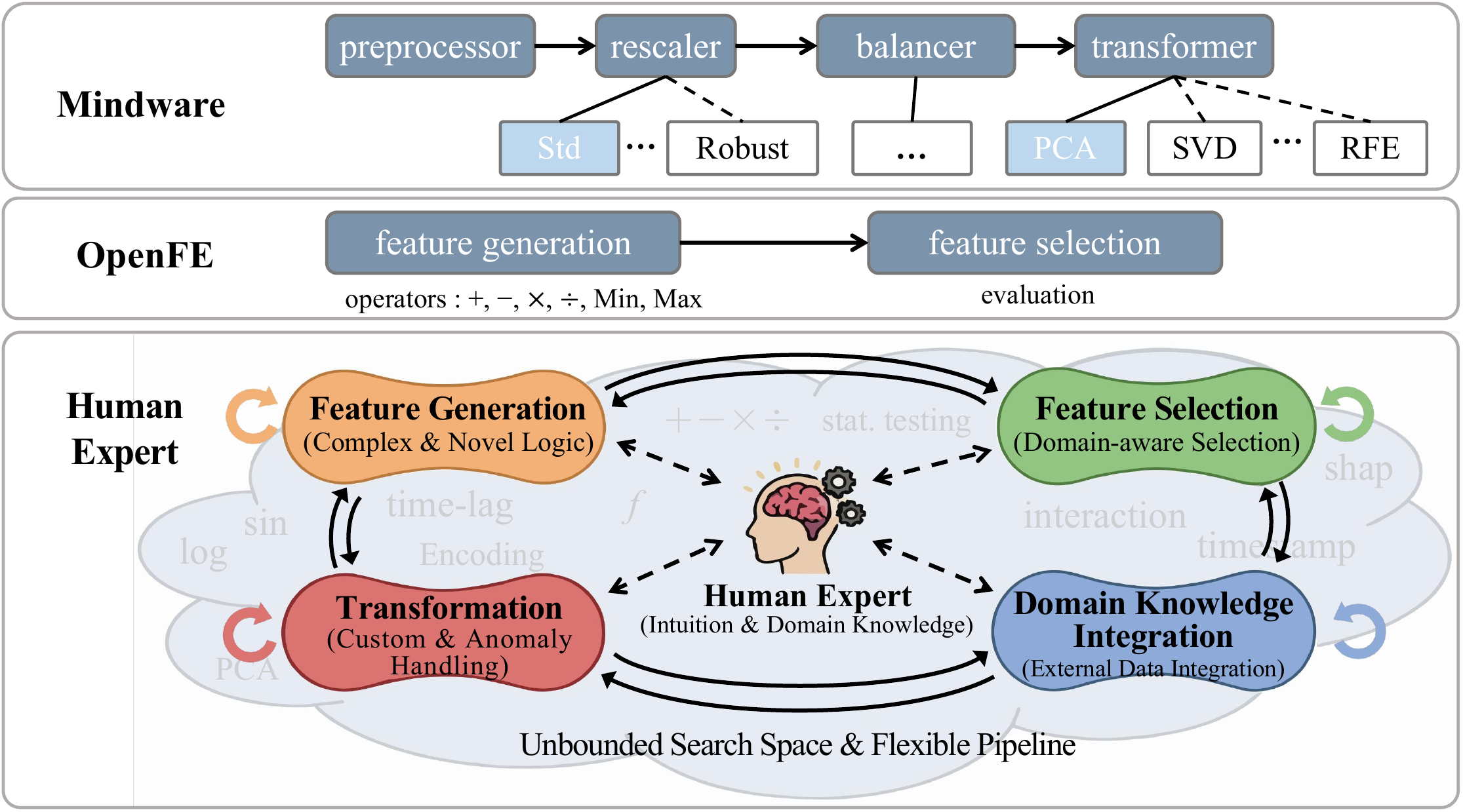}
  \caption{AutoML vs. human expert: The freedom of FE.}
  \label{fig:fe_pipeline}
% \vspace{-1em}
\end{figure}

To overcome these bottlenecks, recent advances have proposed LLM-based FE as an alternative. 
% With extensive domain knowledge and generation capabilities, LLMs can emulate human-like semantic reasoning, enabling the construction of flexible FE pipelines with unbounded operations~\cite{llm4automlsurvey_gu2025large}. 
This direction is part of a broader effort to apply LLMs to tabular-data tasks~\cite{dong2024use}.
With extensive domain knowledge and code-generation capabilities, LLMs can emulate human-like semantic reasoning, enabling flexible feature transformations beyond closed operation libraries.
This potential has sparked a wave of research across the FE spectrum, including preprocessing~\cite{llmfe-preprocessor1_wang2025data,llmfe-preprocessor2_choi2023aliro}, feature selection~\cite{lmpriors_choi2022lmpriors,llmselect_jeong2024llm}, and feature generation~\cite{caafe_hollmann2023large,octree_nam2024optimized,ellm_gong2025evolutionary,lfg_10.24963/ijcai.2025/782,llmfe_abhyankar2025llm,ko2025ferg}.
However, these studies typically focus on isolated sub-tasks, most notably feature generation. Consequently, they yield monolithic pipelines restricted to homogeneous operations, precluding the dynamic interleaving of diverse FE operations.

% \textbf{\em Conclusion \#1: while LLMs can leverage semantic priors to explore an unbounded operator space for FE, realizing their full potential requires moving toward truly free-form, end-to-end pipeline topologies.}
\textbf{\em Conclusion \#1: The intrinsically semantic nature of FE makes LLMs its natural architect, yet fully harnessing this power requires adopting truly free-form pipeline topologies.}

\subsection{Hyperparameter Optimization}
Given the transformed features $\mathcal{X}'$, a learner $\mathcal{A}$ is selected to induce a model. In AutoML, this is formulated as the Combined Algorithm Selection and Hyperparameter Optimization (CASH) problem~\cite{autoweka_thornton2013}
% We treat the choice of the algorithm (e.g., XGBoost, MLP) as a top-level categorical hyperparameter $\lambda_{\text{alg}}$, with a corresponding set of conditional hyperparameters $\boldsymbol{\lambda}_{\text{sub}}$ (e.g., learning rate). The full configuration is $\boldsymbol{\lambda} = (\lambda_{\text{alg}}, \boldsymbol{\lambda}_{\text{sub}})$, and HPO aims to find $\boldsymbol{\lambda}^*$ that minimizes the generalization error.
In this work, we use HPO as an umbrella term that includes CASH, where the configuration space $\Lambda$ jointly encompasses both the choice of the learning algorithm (e.g., XGBoost, MLP) and its associated hyperparameters. The goal of HPO is to identify the optimal configuration $\boldsymbol{\lambda}^*$ that minimizes the generalization error.

The prevailing approach for solving this task is Bayesian Optimization (BO)~\cite{autoweka_thornton2013,auto-sklearn_feurer2022auto,rb_li2020efficient,h2o_ledell2020h2o,lightautoml_vakhrushev2021lightautoml,divbo_shen2022divbo,optdivbo_poduval2024cash}. 
BO solves black-box problems by iterating three steps~\cite{tpe_bergstra2011algorithms,smac_hutter2011sequential,gp_snoek2012practical}:
1) fit a surrogate $f$ on the observed data $\mathcal{O} = \{(\boldsymbol{\lambda}_i, v_i)\}_{i=1}^{n-1}$;
2) sample and select the next configuration by maximizing an acquisition function $\alpha$: $\boldsymbol{\lambda}_{n}=\arg\max_{\boldsymbol{\lambda}}\alpha(\boldsymbol{\lambda}; f)$;
3) evaluate $\boldsymbol{\lambda}_n$ to obtain the performance $v_n$ and update $\mathcal{O} \leftarrow \mathcal{O} \cup \{(\boldsymbol{\lambda}_n, v_n)\}$.
BO is favored for its principled modeling of complex search spaces and acquisition-driven balance of exploration and exploitation.
While recent works have tried to adapt LLMs for HPO~\cite{agenthpo_liu2024large,automl-gpt_zhang2023automl,llambo_liularge,bora_10.24963/ijcai.2025/553}, LLM optimizers suffer from inherent limitations, lag behind BO
methods, and prove unreliable in many practical scenarios~\cite{huang2024exploring}. They lack a surrogate model for objective and uncertainty quantification, and struggle to utilize the full optimization history due to context window limits.

% \textbf{\em Conclusion \#2: BO remains the gold standard for HPO with mathematical guarantees and modeling capabilities, surpassing the current reach of LLMs in hyperparameter spaces.}
\textbf{\em Conclusion \#2: BO remains the gold standard for HPO, surpassing the current reach of LLMs in hyperparameter spaces.}

\subsection{Joint Optimization of FE and HPO}
\label{sec:background_joint_opt}

While FE and HPO are often treated as orthogonal tasks, they are in fact strongly coupled.
We formulate AutoML as an optimization problem that simultaneously searches for the optimal FE pipeline $T^*$ and model configuration $\boldsymbol{\lambda}^*$ to minimize the validation loss:
\begin{equation}
    (T^*, \boldsymbol{\lambda}^*) = \mathop{\arg\min}_{T \in \mathcal{T}^k, \boldsymbol{\lambda} \in \Lambda} \mathcal{L}_{val}\Big(\mathcal{A}(T(\mathcal{D}_{train}); \boldsymbol{\lambda}), T(\mathcal{D}_{val}) \Big),
\end{equation}
where $\mathcal{T}^k$ represents the unbounded space of feature engineering pipelines and $\Lambda$ denotes the hyperparameter space of learner $\mathcal{A}$.

Existing approaches to this problem can be categorized based on the nature of their optimizers, as visualized in Fig.~\ref{fig:teaser}.
(i) \textbf{Homogeneous optimizer framework} (Fig.~\ref{fig:teaser}(a)), employs a single type of optimizer for both tasks, which naturally enables a joint optimization.
Traditional AutoML systems, such as TPOT~\cite{tpot_olson2016tpot}, Auto-sklearn~\cite{auto-sklearn_feurer2022auto}, and Mindware~\cite{rb_li2020efficient}, integrate FE and HPO into a unified yet restricted search space, enabling joint modeling and optimization.
Similarly, LLM-based frameworks like ML-Master~\cite{mlmaster_liu2025ml} and AIDE~\cite{AIDE_jiang2025aide} treat ML pipeline generation as a unified code synthesis task, prompting LLMs to co-propose feature transformations and model configurations within a single reasoning context.
(ii) \textbf{Heterogeneous optimizer framework} (Fig.~\ref{fig:teaser}(b)), adopts distinct optimization strategies for FE and HPO.
This scenario typically necessitates a sequential, greedy strategy: optimizing FE with a fixed learner, followed by HPO on the derived feature set.
Traditional methods (e.g., AutoFeat~\cite{fe4_horn2019autofeat}, OpenFE~\cite{openfe_zhang2023openfe}) optimize features via fixed proxy models before performing post-hoc HPO.
Similarly, LLM-based frameworks such as CAAFE~\cite{caafe_hollmann2023large} and ELLM-FT~\cite{ellm_gong2025evolutionary} focus exclusively on feature evolution, treating HPO as an isolated downstream task.
% Even end-to-end systems like AutoKaggle~\cite{autokaggle_li2024autokaggle} often generate FE code via LLM and then attach a separate HPO (e.g., via grid search).
Crucially, this sequential approach suffers from \textbf{greedy myopia}: it fails to capture the strong dependency between features and model hyperparameters. By fixing hyperparameters during FE, the optimizer inevitably discards potential high-value features that require specific hyperparameter tuning to perform well, thereby trapping the system in sub-optimal local minima.

Therefore, although Conclusions \#1 and \#2 suggest that LLMs are better suited for FE while BO is more effective for HPO, jointly optimizing them is non-trivial due to their inherent heterogeneity. 
Instead of a greedy sequential optimization, we argue for an \textbf{interleaved optimization} strategy that continuously evaluates different (FE pipeline, model configuration) combinations to avoid poor local optima. 
This raises two key challenges: 
(i) \textbf{mutual conditioning and shared context}—the LLM must refine FE with awareness of how features interact with different configurations, and BO must model hyperparameters conditioned on the FE pipeline. 
Otherwise, if the LLM evaluates feature quality without distinguishing the underlying configurations (and vice versa for BO), the historical performance data becomes noisy and misleading, obscuring the true causal impact of any single modification. 
(ii) \textbf{adaptive resource allocation}—the system must decide when to invest budget in improving FE versus HPO, since their marginal utility varies substantially across tasks (e.g., some datasets are feature-sensitive while HPO yields limited gains, and vice versa).

\textbf{\em Conclusion \#3: Effective AutoML requires interleaving LLM-based FE and BO-based HPO, with mutual conditioning and task-adaptive budget allocation.}
\section{Method}
\label{sec:method}

In this section, we present \sys, a framework for LLM-driven \uline{\textbf{F}}eature \uline{\textbf{E}}ngineering empowered by \uline{\textbf{Co}}llaborative \uline{\textbf{H}}yperparameter optimization. 
It comprises three core components: (i) an LLM-based Tree-of-Thought search for FE pipeline; (ii) a mutual condition mechanism that enables collaborative tuning between the LLM-driven FE and BO-based HPO; and (iii) a dynamic optimizer selector to adaptively allocate resources across the optimization process.

\subsection{LLM-based Feature Engineering}
\label{sec:llm_fe}

To effectively navigate the unbounded and semantics-intensive search space of feature engineering, we reformulate the construction of the pipeline from Eq.~\eqref{eq:fe_pipeline} as a \textbf{sequential decision-making problem}.
We define a \textbf{dataset state} $s$ as the representation of the data at a given stage of transformation, where the initial state $s_0 = \mathcal{X}$ is the raw dataset. Each operation $t_j \in \mathcal{T}$ is viewed as an action that transitions the system from state $s_{j-1}$ to a new state $s_j = t_j(s_{j-1})$. Consequently, the optimization of the FE pipeline is equivalent to finding an optimal sequence of \textbf{actions} $(t_1, t_2, \dots, t_k)$ that maximizes the downstream performance reward:
\begin{equation}
    \max_{t_1, \dots, t_k} \text{Reward}(t_k \circ t_{k-1} \circ \dots \circ t_1(s_0)).
\end{equation}
Standard linear prompting is ill-suited for this complex search space as it follows a singular reasoning path without the capacity for backtracking. 
To bridge this gap, we adopt the Tree of Thought (ToT) paradigm~\cite{tot_yao2023tree}, enabling the LLM to explore multiple reasoning branches.
As illustrated in the search tree (Fig.~\ref{fig:CoFEH_FE_workflow}, left), intermediate dataset states ($s_i$) are treated as ``thoughts''—discrete nodes in a hierarchical search space connected by LLM-generated FE operations.
We implement this ToT search via a Monte Carlo Tree Search (MCTS)~\cite{mcts_kocsis2006bandit}, which systematically balances the discovery of novel operations with the exploitation of historical knowledge through four iterative steps (Fig.~\ref{fig:CoFEH_FE_workflow}):
1) \textbf{Selection}: MCTS traverses from the root ($s_0$) along a selection path to identify the most promising dataset state for further expansion.
2) \textbf{Expansion}: the LLM is prompted to generate a new FE operation to process the dataset in the selected node.
3) \textbf{Playout}: Utilizing the executable code synthesized by the LLM, the dataset is transformed and subsequently evaluated through a downstream ML model. This process returns a validation score $v_i$, thereby instantiating a new dataset state $s_{i+1}$ as a successor node in the tree.
4) \textbf{Backpropagation}: The results are propagated back to update three statistics for each ancestor node $s$: visit count $N_s$, cumulative reward $R_s$, and subtree best performance $v^{\max}_s$ (the maximum score observed within its subtree). We first compute a binary reward $r = \mathbb{I}(v_{new} > v^{\max}_{s_{0}})$, indicating whether a new global optimum has been achieved. Subsequently, we update the statistics: $N_s \leftarrow N_s + 1$, $R_s \leftarrow R_s + r$, and $v^{\max}_s \leftarrow \max(v^{\max}_s, v_{new})$.

\begin{figure}[t!]
  \centering
  \includegraphics[width=\linewidth]{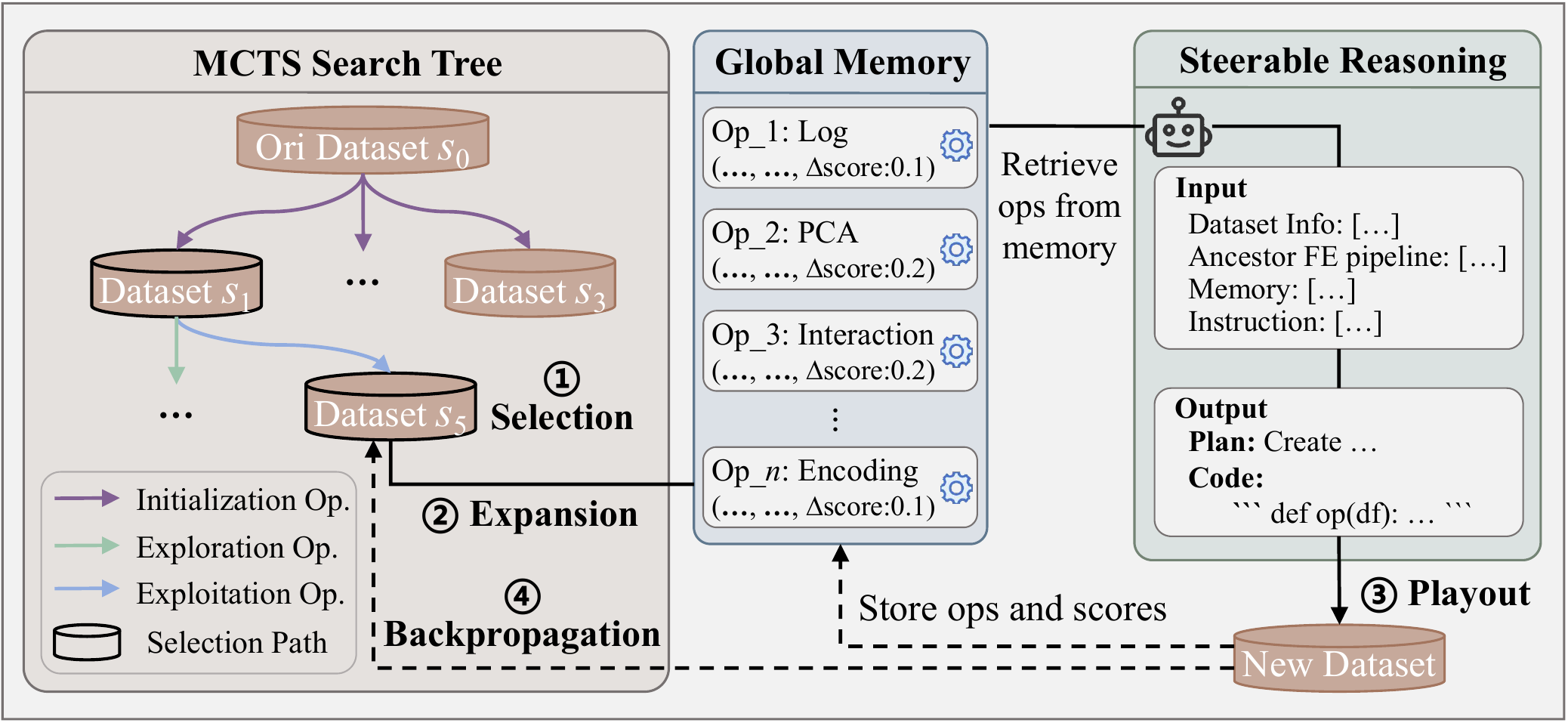}
  \caption{FE workflow of \sys.}
  \label{fig:CoFEH_FE_workflow}
% \vspace{-1.5em}
\end{figure}

\subsubsection{Selection Down the MCTS Tree}

The selection phase initiates at the root node $s_0$, representing the original dataset. 
It traverses the search tree by recursively choosing the child node that maximizes the Upper Confidence Bound for Trees (UCT) criterion~\cite{uct_auer2002using}:
\begin{equation}
\label{eq:uct_selection}
s^* = \arg\max_{s' \in \text{children}(s)} \left( Q(s') + C_1 \cdot \sqrt{\frac{\ln N_s}{N_{s'}}} \right),
\end{equation}
where $C_1$ is a hyperparameter balancing exploitation and exploration. The exploitation term $Q(s')$ evaluates the potential quality of the candidate feature transformation path and is defined as:
\begin{equation}
\label{eq:puct_qvalue}
Q(s') = \frac{R_{s'}}{N_{s'}} + \tilde{v}^{\max}_{s'},
\end{equation}
where the first term $R_{s'}/N_{s'}$ represents the average reward, reflecting the historical success rate of discovering superior states along this branch. 
The second term $\tilde{v}^{\max}_{s'}$ denotes the normalized best validation performance observed within the subtree rooted at $s'$ (min–max normalized using the global
range of observed metric scores).
This formulation of $Q(s')$ ensures that the search prioritizes paths exhibiting both high absolute performance (via $\tilde{v}^{\max}$) and strong potential for iterative improvement (via the average reward). Through this mechanism, the algorithm identifies a selection path extending from $s_0$ to a non-fully expanded target node, as shown by the bold-bordered nodes in the left panel of
  Fig.~\ref{fig:CoFEH_FE_workflow}.

\subsubsection{Expansion Through Steerable Reasoning}

Upon identifying a target node $s_{\text{base}}$ through UCT policy, the steerable reasoning module expands the search tree by synthesizing a new FE operation. We construct a structured prompt for the LLM comprising the following core components:
(i) Dataset info ($\psi$): Semantic descriptions and statistical metadata of the base dataset to provide domain context.
(ii) Ancestor FE pipeline ($T_{\text{anc}}$): The sequence of operations previously applied from $s_0$ to $s_{\text{base}}$, which defines the current data state.
(iii) Memory ($\mathcal{M}_{s_\text{base}}$): A collection of high-performing operations archived from previous successful trials. 
Crucially, this component is provided only under the \textsc{Exploitation} directive to facilitate refinement.
(iv) Directive ($d$): An optimization instruction $d \in \{\textsc{Initialization}, \textsc{Exploration}, \textsc{Exploitation}\}$ that explicitly steers the LLM’s strategy.
Formally, the LLM generates a reasoning chain $\mathcal{R}$ followed by the code for a new operation $t_{\text{new}}$:
\begin{equation}
\label{eq:llm_reasoning}
(\mathcal{R}, t_{\text{new}}) = \text{LLM}\big(\psi, T_{\text{anc}}, \mathcal{M}_{s_\text{base}}, d\big).
\end{equation}
The synthesized operation code is then executed to transform the base dataset, instantiating a new dataset state:
$s_{\text{new}} = t_{\text{new}}(s_{\text{base}})$.
Crucially, the directive $d$ explicitly governs the search behavior:
\begin{itemize}[leftmargin=2em, topsep=3pt, partopsep=5pt, itemsep=2pt, parsep=0pt]
    \item $\textbf{\textsc{Initialization}}$ is invoked specifically for the root node $s_0$ to generate a high-quality initial FE operation.
    \item $\textbf{\textsc{Exploration}}$ encourages the LLM to propose novel operations to probe unknown regions of the search space.
    \item $\textbf{\textsc{Exploitation}}$ distills elite experiences and operations from $\mathcal{M}_{\text{global}}$ to refine the current dataset.
\end{itemize}
% (i) $\textbf{\textsc{Initialization}}$: is invoked specifically for the root node $s_0$ to generate a high-quality initial FE operation,
% (ii) $\textbf{\textsc{Exploration}}$: encourages the LLM to propose novel operations to probe unknown regions of the search space,
% and (iii) $\textbf{\textsc{Exploitation}}$: distills elite experiences and operations from $\mathcal{M}_{\text{global}}$ to refine the current dataset.

Detailed prompt template are shown in Appendix~\ref{app:prompt_design}. 
% A node is regarded as not fully expanded and remains eligible for selection until it satisfies a predefined expansion quota. 
The root node $s_0$ is treated as non-fully expanded until the $\textsc{Init.}$ directive has been executed five times. 
Any other node is considered fully expanded only after both $\textsc{Exploration}$ and $\textsc{Exploitation}$ directives have been performed twice. 
This ensures that each promising dataset state is thoroughly investigated through both creative discovery and historical refinement before the search moves deeper.

\subsubsection{Global Memory and Operation Retrieval}
As illustrated in the central module of Fig.~\ref{fig:CoFEH_FE_workflow}, every historical operation $t$ generated during the search is archived in the Global Memory $\mathcal{M}_{\text{global}}$. Each entry is stored as a tuple $(\mathcal{R}, \mathcal{F}_t, v, \Delta v)$, which includes: (i) the reasoning chain $\mathcal{R}$, (ii) the subset of features $\mathcal{F}_t$ required for the operation, which is parsed from the structured format of $\mathcal{R}$, (iii) the resulting performance score $v$, and (iv) the relative improvement $\Delta v = v - v_{\text{base}}$ achieved over its parent node.

To support the $\textsc{Exploitation}$ directive, we employ a two-stage retrieval mechanism to identify high-utility operations:
1) \textbf{Functional filtering}: We first ensure semantic compatibility by matching the required feature set $\mathcal{F}_t$ with the features currently available in state $S_{\text{base}}$. Only operations whose functional dependencies are fully satisfied by the current dataset are retained as valid candidates.
2) \textbf{Pareto-based selection}: From the filtered candidates, we identify high-quality operations by balancing absolute performance $v$ and relative gain $\Delta v$. We prioritize operations with high $\Delta v$ for their strong transformative potential, as well as those with high $v$, which represent the ability to refine performance even when the pipeline has reached well-performing regimes. Formally, we construct a Pareto frontier in the $(v, \Delta v)$ objective space and select the non-dominated operations to serve as ``elite experiences'' $\mathcal{M}_{s_\text{base}}$ within the steerable reasoning prompt.
This dual-metric selection ensures the LLM distills insights from both breakthrough transformations and marginal refinements recorded in the search history.

\subsection{Collaborative Tuning with HPO}

The final efficacy of an FE pipeline $T$ is inextricably linked to the hyperparameters $\boldsymbol{\lambda}$ of the downstream ML model. A decoupled approach, where FE is optimized independently of HPO, often results in the greedy myopia discussed in Section~\ref{sec:background_joint_opt}—the premature rejection of promising feature transformations simply because they perform poorly under default model configurations. To address this, we propose a collaborative tuning framework that performs an interleaved exploration of the joint space of $(T, \boldsymbol{\lambda})$ combinations to ensure the discovery of global optima.
The core challenge of joint optimization is the bidirectional dependency: BO requires a promising feature set to optimize $\boldsymbol{\lambda}$, while the LLM requires an optimized model to accurately evaluate the potential of a pipeline $T$. We resolve this through a \textbf{mutual conditioning mechanism}.

\subsubsection{BO-based HPO Conditioned on FE}
\label{sec:bo_conditioned_on_fe}

In contrast to conventional HPO that operates on a static dataset, the BO conditioned on FE must navigate a dynamic search tree containing multiple dataset states. The primary challenge lies in modeling the joint influence of the FE pipeline and configuration to determine which specific dataset state $s$ paired with which configuration $\boldsymbol{\lambda}$, yields the global optimum. To achieve this, we redefine the two core components of BO: the surrogate model and the acquisition function optimizer.

\noindent \textbf{Surrogate model}:
Since the FE pipeline lacks an explicit, continuous search space, we utilize meta-features $\phi(s)$ to characterize the dataset state after transformations. This allows the surrogate model to map discrete tree nodes into a representative feature space. We formalize the training dataset for the surrogate model $f$ as:
% \begin{equation}
% \label{eq:bo_data}
% \mathcal{D}_{\text{BO}} = \bigcup_{S_i \in \mathcal{V}_{\text{tree}}} \left\{ \left( \left[ \phi(S_i), \boldsymbol{\lambda}_{i,j} \right], v_{i,j} \right) \right\}_j
% \end{equation}
\begin{equation}
\label{eq:bo_data}
\mathcal{D}_{\text{BO}} = \left\{ \left( [\phi(s_i), \boldsymbol{\lambda}_{i,j}], v_{i,j} \right) \mid s_i \in \mathcal{V}_{\text{tree}}, \forall j \right\},
\end{equation}
where
$\mathcal{V}_{\text{tree}}$ is the set of all nodes in the MCTS tree.
% $\phi(s_i) \in \mathbb{R}^d$ is the meta-feature vector representing the dataset state at node $s_i$.
$\boldsymbol{\lambda}_{i,j}$ and $v_{i,j}$ denote the $j$-th ML configuration and its score evaluated on node $s_i$.
We employ Random Forest (RF)~\cite{smac_hutter2011sequential} as the surrogate model due to its support for categorical variables and computational efficiency.

\noindent \textbf{Acquisition function optimizer}:
To identify the most promising $(s, \boldsymbol{\lambda})$ pair, the optimizer explores the joint space through a hybrid sampling strategy to construct a candidate pool $\mathcal{P}_\text{cand}$:
\begin{itemize}[leftmargin=2em, topsep=3pt, partopsep=5pt, itemsep=5pt, parsep=0pt]
\item \textbf{Local search}: For each node in the tree, we perform perturbations around all its historically evaluated configurations within the hyperparameter space $\Lambda$. These perturbed configurations are paired with the corresponding node's meta-features to capture local improvements.
\item \textbf{Random search}: We perform global random sampling across the hyperparameter space $\Lambda$. Each sampled configuration is combined with the meta-features of every node in the current tree to ensure a broad exploration of the joint space.
\end{itemize}
Finally, the surrogate model predicts the performance of the combinations in the candidate pool $\mathcal{P}_\text{cand}$, after which an acquisition function (e.g., Expected Improvement~\cite{ei_hernandez2014predictive}) quantifies their utility and selects the optimal combination $(s^*, \boldsymbol{\lambda}^*)$. This mechanism simultaneously recommends the next ML configuration and identifies the specific dataset state $s^*$ best positioned to leverage its potential.

\subsubsection{LLM-based FE Conditioned on HPO}
On the other hand, the MCTS-based FE search must be informed by the HPO process. We modify the original FE search logic to incorporate two levels of HPO-driven conditioning:
(i) During the Selection phase, the maximum performance $\tilde{v}^{\max}_{s'}$ in Eq.~\eqref{eq:puct_qvalue} is determined by HPO; specifically, when the BO discovers a superior validation score $v^*$ at any given node, a localized update propagates this new performance "ceiling" back through the node’s ancestors to prioritize that branch in future selection cycles.
This enables the MCTS to identify FE pipelines that exhibit the greatest synergy with optimized ML configurations.
(ii) During the Playout phase, the new child node $s_{\text{new}}$ inherits the best-performing ML configuration from its parent for evaluation, ensuring that new feature operations are immediately evaluated at their highest possible capacity rather than under default settings.

\subsection{Dynamic Optimizer Selector}
While FE and HPO exhibit strong synergy, their relative contribution to performance gains varies significantly across datasets and search stages. To optimize resource allocation, we model the selection between FE and HPO as a Multi-Armed Bandit (MAB) problem. We employ a Predictor Upper Confidence Bound (PUCB)~\cite{puct_rosin2011multi} policy to dynamically decide which optimizer to execute at each step $m$:
\begin{equation}
\label{eq:puct_selector}
a^* = \arg\max_{a \in \{ \text{FE, HPO} \}} \left( Q(a) + C_2 \cdot \omega_a(m) \frac{\sqrt{\sum_{a' \in \{ \text{FE, HPO} \}} N_{a'}}}{1 + N_a} \right),
\end{equation}
where
$N_a$ is the number of times action $a$ has been selected.
$C_2$ is a hyperparameter controlling exploration pressure.
$\omega_a(m)$ is the time-varying prior weight for action $a$ at iteration $m$.
$Q(a)$ is the exploitation term, formulated as the empirical success rate of action $a$ in achieving a performance breakthrough. 
Specifically, $Q(\text{FE})$ represents the proportion of trials where the generated FE operation yields a score surpassing that of its parent node. $Q(\text{HPO})$ is defined as the frequency with which HPO on a given node discovers a configuration that exceed the node's historical best performance.

\noindent \textbf{Prior weight scheduling}.
we define the prior weights $\omega_a(m)$ as linear functions of the search progress $m/M$ ($M$ is the total budget):
% \begin{itemize}[leftmargin=2em, topsep=3pt, partopsep=5pt, itemsep=5pt, parsep=0pt]
% \item $\omega_{\text{FE}}(m) = p_1 - \delta m$: Begins at $p_1$ and decays linearly to $0.5$.
% \item $\omega_{\text{HPO}}(m) = p_2 + \delta m$: Begins at $p_2$ and increases linearly to $0.5$.
% \end{itemize}
\begin{itemize}[leftmargin=2em, topsep=3pt, partopsep=5pt, itemsep=5pt, parsep=0pt]
\item $\omega_{\text{FE}}(m) = p_1 - (p_1 - 0.5) \frac{m}{M}$: Starts at $p_1$ and decays to $0.5$.
\item $\omega_{\text{HPO}}(m) = p_2 + (0.5 - p_2) \frac{m}{M}$: Starts at $p_2$ and increases to $0.5$.
\end{itemize}
Given $p_1 + p_2 = 1$, these prior weights can be unified using a step parameter $\delta = \frac{p_1 - 0.5}{M}$: $\omega_{\text{FE}}(m) = p_1 - \delta m$ and $\omega_{\text{HPO}}(m) = p_2 + \delta m$.

\begin{theorem}[Budget Equilibrium]
\label{thm:budget_equilibrium}
Consider the rule in Eq.~\eqref{eq:puct_selector} under a neutral reward signal ($Q(a) = \text{const}$). Let $M \in \mathbb{Z}^+$ be the total budget. If the initial bias satisfies $0.5 \le p_1 <\frac{M+1.5}{M+3}$, the linear scheduling of $\omega_{\text{FE}}$ and $\omega_{\text{HPO}}$, which converges to an equilibrium of $0.5$ at $m=M$, guarantees a balanced budget distribution:
$$
N_{\text{FE}}(M), N_{\text{HPO}}(M) \in \left\{ \left\lfloor \frac{M}{2} \right\rfloor, \left\lceil \frac{M}{2} \right\rceil \right\}
$$
\end{theorem}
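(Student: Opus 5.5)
The plan is to reduce the PUCB rule in Eq.~\eqref{eq:puct_selector} to a comparison of two ratios, and then prove by induction on the step index $m$ that the selector strictly alternates FE, HPO, FE, HPO, $\dots$.

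\textbf{Reduction.} Under a neutral reward, $Q(\text{FE})=Q(\text{HPO})$, so the exploitation term cancels. The factor $C_2\sqrt{N_{\text{FE}}+N_{\text{HPO}}}$ is common to both arms. Hence (for $C_2>0$ and $m\ge 1$; the first step is handled separately) the rule picks FE iff
$$\frac{\omega_{\text{FE}}(m)}{1+N_{\text{FE}}} \ge \frac{\omega_{\text{HPO}}(m)}{1+N_{\text{HPO}}},$$
with ties resolved toward FE. I will fix this tie-break explicitly because it matters when $p_1=0.5$. At the degenerate first step, where $\sum_{a'} N_{a'}=0$, I will adopt the same convention that FE goes first.

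I will also fix the indexing convention used by the paper: the step counter $m$ runs over the $M$ decisions and $N_{\text{FE}}+N_{\text{HPO}}$ equals the number of previous decisions. The exact offset (starting at $m=0$ or $m=1$) must be chosen so that the final inequality reproduces the stated threshold $\frac{M+1.5}{M+3}$. I will settle it by checking which convention makes the binding case below produce exactly that bound.

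\textbf{Inductive invariant.} The invariant is that before each decision either $N_{\text{FE}}=N_{\text{HPO}}=k$, or $N_{\text{FE}}=k+1,\ N_{\text{HPO}}=k$.

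In the balanced case FE is chosen. Since $p_1\ge 0.5$ and $p_1+p_2=1$, we have $\omega_{\text{FE}}(m)\ge 0.5\ge\omega_{\text{HPO}}(m)$ for all $m\le M$, so FE wins or ties.

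In the unbalanced case we need HPO to be chosen, i.e.
$$(k+1)\,\omega_{\text{FE}}(m) < (k+2)\,\omega_{\text{HPO}}(m).$$
Using $\omega_{\text{HPO}}=1-\omega_{\text{FE}}$, this becomes $\omega_{\text{FE}}(m)<\frac{k+2}{2k+3}$, where $m$ is tied to $k$ by $m \approx 2k+1$ up to the indexing offset. Both sides are decreasing in the step index: the left side linearly, with slope $-\delta$, and the right side like $\frac{m+3}{2m+4}$ toward $1/2$. So the task is to check a one-parameter family of linear inequalities in $p_1$ over all admissible odd steps.

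Substituting $\omega_{\text{FE}}(m)=p_1-\frac{(p_1-0.5)m}{M}$ and solving for $p_1$ gives a per-step upper bound $p_1<g(m)$. The theorem then follows by showing that $\min_m g(m)$ over the admissible steps equals $\frac{M+1.5}{M+3}$.

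\textbf{Main obstacle.} The hard step is identifying which step is binding and verifying the minimization. The gap $\frac{m+3}{2m+4}-\omega_{\text{FE}}(m)$ is not monotone in an obvious way, because both terms shrink. I will write $g(m)$ explicitly as a ratio of affine functions in $m$. Its monotonicity is then determined by the sign of a single determinant. I expect the minimum to occur at an endpoint, namely the last odd step before $M$, and to yield exactly $\frac{M+1.5}{M+3}$. If it lands at the other endpoint instead, I will recheck the indexing offset. I will also confirm that the strict inequality covers the boundary case $p_1=0.5$, where $\delta=0$, the weights are both $0.5$, and alternation follows from the tie-break.

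\textbf{Conclusion.} Once the invariant holds through all $M$ decisions, strict alternation gives $N_{\text{FE}}(M)=\lceil M/2\rceil$ and $N_{\text{HPO}}(M)=\lfloor M/2\rfloor$, which is the claimed balanced distribution.
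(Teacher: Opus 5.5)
Your inductive invariant is false under the theorem's hypotheses, so the plan cannot be completed. Carry out the computation you set up. At an unbalanced step $m=2k+1$, HPO is chosen iff $\omega_{\text{FE}}(m)<\frac{m+3}{2m+4}$. Substituting the linear schedule turns this into $p_1<g(m)=\frac12+\frac{M}{2(m+2)(M-m)}$.

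This $g$ is not a ratio of affine functions, and its minimum is not at an endpoint. It is smallest near the middle, $m\approx (M-2)/2$, where it is roughly $\frac12+\frac{2M}{(M+2)^2}\approx\frac12+\frac{2}{M}$. That is far below $\frac{M+1.5}{M+3}\approx 1-\frac{1.5}{M}$, and already for $M\ge 4$ it falls below the theorem's threshold.

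Concretely, take $M=4$ and $p_1=0.75<5.5/7$. After FE at $m=0$, the second decision compares $0.6875/2$ with $0.3125/1$ and picks FE again. The run is FE, FE, HPO, HPO: balanced at the end, but not alternating. With the paper's own setting $M=200$, $p_1=0.9$, the second decision compares $0.898/2$ with $0.102/1$, and FE is chosen many times in a row. That front-loading is exactly the intended ``learnability warmup''. The theorem asserts only terminal balance, and no choice of indexing offset or tie-break restores alternation.

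What you need is an invariant that lets FE run ahead early but still controls a \emph{weighted} imbalance, which becomes the plain count difference only at $m=M$. The paper defines $Q(m)=\omega_{\text{FE}}(m)\bigl(1+N_{\text{HPO}}(m)\bigr)-\omega_{\text{HPO}}(m)\bigl(1+N_{\text{FE}}(m)\bigr)$, so FE is chosen iff $Q(m)>0$. Using $\omega_{\text{FE}}+\omega_{\text{HPO}}=1$ and $N_{\text{FE}}+N_{\text{HPO}}=m$, it derives $Q(m+1)=Q(m)+\omega_{\text{FE}}(m)-I(m+1)-\delta(m+3)$, where $I(m+1)=1$ if FE is selected and $0$ otherwise.

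The hypothesis $p_1<\frac{M+1.5}{M+3}$ is what guarantees $\omega_{\text{FE}}(m)-\delta(m+3)>0$ for all $m<M$. Consequently:
\begin{itemize}
\item an HPO step taken from $Q(m)\le 0$ strictly increases $Q$, but only to at most $p_1<1$;
\item an FE step taken from $Q(m)>0$ decreases $Q$ by less than $1$.
\end{itemize}
By induction, $|Q(m)|<1$ for all $m\le M$. At $m=M$ both weights equal $0.5$, so $Q(M)=\frac12\bigl(N_{\text{HPO}}(M)-N_{\text{FE}}(M)\bigr)$. Hence $|N_{\text{HPO}}(M)-N_{\text{FE}}(M)|<2$, and the parity of $M$ finishes the proof.

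Your reduction to comparing $\omega_a/(1+N_a)$ is sound and matches the paper. Your remark about the degenerate first step, where $\sum_{a'}N_{a'}=0$, is a fair point the paper glosses over. The part that must be replaced is the alternation invariant.
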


The proofs are provided in Appendix~\ref{app:proof}.
By leveraging this PUCB-based mechanism, the framework achieves a dual advantage: 
(i) With prior weight, it implements a learnability warmup that prioritizes FE initially to ensure the data is "model-ready," shielding HPO from deceptive, noise-driven signals inherent in raw features. Unlike traditional decoupled methods that optimize FE and HPO sequentially, our approach facilitates a balanced co-optimization adapted by task-specific performance gains $Q(a)$.
(ii) Theorem~\ref{thm:budget_equilibrium} guarantees reaching a 5:5 budget equilibrium in neutral reward scenarios (i.e., when the exploitation term $Q(a)$ is not considered), making the actual allocation fundamentally task-governed.
In summary, FE and HPO engage in a dynamic competition for computational resources, where the final allocation is adaptively steered by task-specific empirical rewards to reach the global optimum.

\subsection{Algorithm Summary}
\sys iteratively performs joint optimization in four steps:: 
1) The optimizer selector applies the PUCB rule to determine whether to execute FE or HPO; 
2) The selected module is executed, where FE utilizes MCTS to extend the most promising feature pipeline $T$ given the current HPO state, 
while HPO employs BO to identify the optimal configuration $\boldsymbol{\lambda}$ conditioned on all dataset states; 
3) The proposed $(T, \boldsymbol{\lambda})$ pair is evaluated on the validation set to obtain a score;
4) This score is returned as a reward signal to the selector for future decisions. 
This process repeats until the total budget $M$ is exhausted, after which \sys returns the optimal pair $(T^*, \boldsymbol{\lambda}^*)$.

% \noindent \textbf{Extensibility}. 
% CoFEH establishes a paradigm for the joint optimization of LLM-based FE and BO-based HPO. Characterized by its modular extensibility, the architecture is agnostic to specific HPO implementations and maintains full compatibility with state-of-the-art ones proposed for specific scenarios~\cite{divbo_shen2022divbo,optdivbo_poduval2024cash}.

\section{Experiment}

\subsection{Experiment Setup}
\label{sec:exp_setup}

\begin{table*}[t!]
\centering
\caption{Comparison in standalone FE and joint FE+HPO scenarios. We report test error (\%) for classification and test MSE for regression across 3 runs (Mean ± Std, $\downarrow$). Best and second-best results are bolded and underlined. Bottom rows show the average rank (Avg. Rank) per scenario, while Imp. (\%) quantifies the relative error reduction of joint FE+HPO relative to standalone FE.}
\label{tab:main_results_no_std}
\setlength{\tabcolsep}{2pt}
\resizebox{\textwidth}{!}{
\begin{tabular}{l|cccccc|cccccc}
\toprule
 & \multicolumn{6}{c|}{Standalone FE} & \multicolumn{6}{c}{Joint FE + HPO} \\
Dataset & Mindware & OpenFE & OCTree & ELLM & LFG & \textbf{\sys} & Mindware & OpenFE & OCTree & ELLM & LFG & \best{\sys} \\
\midrule
\rowcolor{gray!10} \multicolumn{13}{c}{\textit{Classification Tasks}} \\
rl & \std{19.65}{0.81} & \std{19.48}{0.20} & \std{22.80}{0.38} & \textbf{\std{17.84}{1.80}} & \std{22.77}{0.81} & \underline{\std{17.84}{2.30}} & \std{23.20}{0.30} & \textbf{\std{19.52}{0.10}} & \std{22.98}{0.40} & \underline{\std{19.63}{1.90}} & \std{22.87}{0.84} & \std{21.35}{0.11} \\
electricity & \std{9.08}{0.13} & \std{8.47}{0.07} & \std{8.54}{0.10} & \underline{\std{7.71}{0.21}} & \std{8.84}{0.24} & \textbf{\std{7.31}{0.35}} & \std{8.28}{0.17} & \std{7.93}{0.07} & \std{7.76}{0.31} & \textbf{\std{6.79}{0.48}} & \std{7.65}{0.26} & \underline{\std{7.15}{0.14}} \\
compass & \std{23.54}{0.07} & \textbf{\std{19.94}{0.17}} & \std{25.34}{0.18} & \std{22.93}{0.72} & \std{22.70}{0.59} & \underline{\std{22.25}{0.08}} & \std{23.63}{0.23} & \underline{\std{20.79}{0.09}} & \std{23.44}{0.11} & \std{23.63}{0.75} & \std{21.76}{0.17} & \textbf{\std{19.57}{0.44}} \\
wine & \std{23.32}{0.25} & \underline{\std{21.43}{0.51}} & \std{22.72}{0.80} & \std{22.19}{0.16} & \std{23.13}{0.27} & \textbf{\std{19.45}{0.18}} & \std{22.70}{1.10} & \underline{\std{22.06}{0.55}} & \std{23.46}{0.91} & \std{23.54}{0.44} & \std{23.41}{0.48} & \textbf{\std{19.50}{0.62}} \\
house\_16H & \std{12.30}{0.04} & \std{13.03}{0.18} & \underline{\std{11.62}{0.10}} & \std{11.76}{0.27} & \textbf{\std{11.60}{0.18}} & \std{12.06}{0.05} & \textbf{\std{11.92}{0.24}} & \std{12.38}{0.23} & \std{12.18}{0.25} & \std{12.31}{0.42} & \underline{\std{11.96}{0.19}} & \std{12.27}{0.04} \\
Magic & \std{13.92}{0.26} & \textbf{\std{11.87}{0.23}} & \std{14.21}{0.15} & \std{14.13}{0.15} & \std{14.14}{0.05} & \underline{\std{12.86}{0.26}} & \underline{\std{13.12}{0.36}} & \textbf{\std{10.99}{0.22}} & \std{14.42}{0.15} & \std{14.39}{0.28} & \std{14.24}{0.14} & \std{13.53}{0.40} \\
higgs & \std{28.10}{0.13} & \std{29.67}{0.09} & \std{28.22}{0.01} & \std{28.18}{0.02} & \underline{\std{27.74}{0.07}} & \textbf{\std{27.14}{0.32}} & \underline{\std{27.22}{0.06}} & \std{27.24}{0.09} & \std{27.46}{0.09} & \std{27.45}{0.09} & \std{27.31}{0.11} & \textbf{\std{26.35}{0.02}} \\
jannis & \textbf{\std{20.33}{0.13}} & \std{22.03}{0.19} & \std{20.99}{0.13} & \std{21.05}{0.06} & \std{21.02}{0.08} & \underline{\std{20.68}{0.15}} & \textbf{\std{19.65}{0.12}} & \std{20.77}{0.23} & \std{20.62}{0.15} & \std{20.56}{0.29} & \std{20.41}{0.01} & \underline{\std{20.34}{0.14}} \\
credit & \std{25.10}{0.36} & \std{26.21}{0.01} & \std{24.31}{0.15} & \std{24.47}{0.28} & \textbf{\std{24.13}{0.29}} & \underline{\std{24.21}{0.03}} & \underline{\std{22.59}{0.31}} & \std{22.94}{0.07} & \std{22.95}{0.09} & \std{22.92}{0.07} & \std{22.90}{0.24} & \textbf{\std{22.58}{0.21}} \\
eye\_movements & \std{30.88}{1.70} & \std{36.96}{1.10} & \std{35.06}{0.68} & \textbf{\std{29.84}{1.20}} & \std{36.49}{1.40} & \underline{\std{30.31}{1.50}} & \std{36.02}{1.80} & \std{35.98}{0.70} & \std{35.57}{0.79} & \underline{\std{30.54}{1.50}} & \std{36.99}{1.40} & \textbf{\std{29.46}{0.95}} \\
kddCup09 & \std{21.07}{0.47} & \std{22.69}{0.59} & \std{20.61}{0.64} & \std{20.74}{0.47} & \textbf{\std{19.75}{0.63}} & \underline{\std{20.15}{0.30}} & \underline{\std{19.51}{0.23}} & \std{21.01}{0.62} & \std{20.10}{0.54} & \std{19.99}{0.27} & \std{19.68}{0.07} & \textbf{\std{19.35}{0.44}} \\
road-safety & \std{21.10}{0.01} & \std{20.92}{0.09} & \std{21.94}{0.28} & \std{21.42}{0.20} & \underline{\std{19.87}{0.68}} & \textbf{\std{19.56}{0.07}} & \std{20.49}{0.05} & \underline{\std{19.60}{0.23}} & \std{21.04}{0.28} & \std{20.69}{0.86} & \std{19.99}{0.41} & \textbf{\std{18.53}{0.28}} \\
bank-marketing & \std{21.68}{0.25} & \std{22.16}{0.27} & \textbf{\std{20.64}{0.23}} & \underline{\std{20.80}{0.36}} & \std{21.00}{0.11} & \std{21.38}{0.30} & \underline{\std{19.79}{0.13}} & \std{20.55}{0.01} & \std{20.16}{0.07} & \std{20.31}{0.67} & \std{20.15}{0.14} & \textbf{\std{19.72}{0.16}} \\
phoneme & \std{14.30}{0.18} & \underline{\std{13.51}{0.11}} & \std{13.88}{0.26} & \std{14.30}{0.18} & \std{14.16}{0.18} & \textbf{\std{12.37}{0.30}} & \std{14.16}{0.19} & \underline{\std{13.04}{0.05}} & \std{14.06}{0.33} & \std{14.30}{0.19} & \std{14.30}{0.20} & \textbf{\std{12.48}{0.10}} \\
covtype & \std{11.64}{0.57} & \textbf{\std{10.69}{0.01}} & \std{13.52}{0.22} & \std{12.99}{0.04} & \std{13.75}{1.40} & \underline{\std{11.12}{0.61}} & \std{10.07}{0.13} & \std{15.92}{11.00} & \std{10.06}{0.15} & \underline{\std{9.98}{0.51}} & \std{10.23}{1.40} & \textbf{\std{7.87}{0.23}} \\
california & \std{9.72}{0.10} & \std{9.86}{0.08} & \std{9.47}{0.12} & \std{9.58}{0.01} & \underline{\std{9.31}{0.08}} & \textbf{\std{9.05}{0.12}} & \underline{\std{9.27}{0.10}} & \std{9.37}{0.07} & \std{9.53}{0.05} & \std{9.74}{0.11} & \std{9.35}{0.05} & \textbf{\std{8.68}{0.26}} \\
kdd\_ipums\_la & \std{12.32}{0.74} & \std{12.91}{0.11} & \underline{\std{11.95}{0.16}} & \textbf{\std{11.63}{0.46}} & \std{12.28}{0.10} & \std{12.06}{0.24} & \std{11.18}{0.42} & \std{11.31}{0.13} & \std{11.59}{0.29} & \underline{\std{11.10}{0.08}} & \std{11.54}{0.10} & \textbf{\std{10.93}{0.17}} \\
MiniBooNE & \std{5.89}{0.07} & \std{6.31}{0.02} & \std{6.02}{0.04} & \std{5.97}{0.07} & \underline{\std{5.89}{0.08}} & \textbf{\std{5.62}{0.05}} & \underline{\std{5.65}{0.02}} & \std{5.92}{0.02} & \std{5.82}{0.02} & \std{5.79}{0.03} & \std{5.80}{0.03} & \textbf{\std{5.51}{0.04}} \\
pol & \std{1.95}{0.09} & \std{1.79}{0.05} & \textbf{\std{1.69}{0.04}} & \std{1.82}{0.03} & \std{1.83}{0.07} & \underline{\std{1.75}{0.19}} & \underline{\std{1.74}{0.07}} & \std{1.85}{0.12} & \std{1.74}{0.04} & \std{1.87}{0.10} & \std{1.94}{0.07} & \textbf{\std{1.42}{0.12}} \\
\midrule
\rowcolor{gray!10} \multicolumn{13}{c}{\textit{Regression Tasks}} \\
airfoil\_self\_noise & \std{2.73}{0.10} & \std{2.36}{0.03} & \std{2.57}{0.06} & \std{2.44}{0.10} & \underline{\std{2.23}{0.05}} & \textbf{\std{2.01}{0.17}} & \std{2.33}{0.18} & \std{2.38}{0.07} & \std{2.12}{0.10} & \std{2.20}{0.06} & \underline{\std{1.95}{0.22}} & \textbf{\std{1.50}{0.10}} \\
cpu\_small & \std{11.14}{0.25} & \textbf{\std{10.12}{0.07}} & \std{11.16}{0.07} & \std{11.15}{0.08} & \std{11.02}{0.15} & \underline{\std{10.97}{0.07}} & \std{8.26}{0.34} & \std{8.51}{0.34} & \textbf{\std{7.85}{0.46}} & \underline{\std{7.85}{0.58}} & \std{7.97}{0.45} & \std{9.83}{0.67} \\
diamonds ($\times 10^{5}$) & \std{3.18}{0.01} & \underline{\std{2.84}{0.01}} & \std{3.04}{0.04} & \std{2.98}{0.06} & \std{2.85}{0.02} & \textbf{\std{2.72}{0.04}} & \std{2.92}{0.01} & \underline{\std{2.68}{0.02}} & \std{2.90}{0.02} & \std{2.87}{0.05} & \std{2.84}{0.01} & \textbf{\std{2.60}{0.05}} \\
plasma\_retinol ($\times 10^{4}$) & \std{5.77}{0.58} & \std{5.43}{0.09} & \underline{\std{4.54}{0.40}} & \textbf{\std{4.35}{0.21}} & \std{4.64}{0.04} & \std{5.83}{0.37} & \std{5.99}{0.33} & \std{4.65}{0.30} & \std{4.82}{0.04} & \std{4.76}{0.29} & \textbf{\std{4.07}{0.27}} & \underline{\std{4.37}{0.30}} \\
forest-fires ($\times 10^{3}$) & \std{4.43}{0.62} & \std{4.70}{0.31} & \std{4.27}{0.11} & \std{4.29}{0.19} & \textbf{\std{3.96}{0.08}} & \underline{\std{4.26}{0.02}} & \underline{\std{3.98}{0.04}} & \std{4.00}{0.00} & \std{4.16}{0.15} & \std{4.28}{0.39} & \textbf{\std{3.97}{0.02}} & \std{3.98}{0.02} \\
housing ($\times 10^{9}$) & \std{2.23}{0.00} & \std{2.16}{0.00} & \std{2.16}{0.03} & \std{2.09}{0.00} & \underline{\std{2.00}{0.07}} & \textbf{\std{1.97}{0.06}} & \std{2.06}{0.01} & \underline{\std{1.99}{0.02}} & \std{2.10}{0.04} & \std{2.04}{0.03} & \std{1.99}{0.06} & \textbf{\std{1.89}{0.01}} \\
bike ($\times 10^{3}$) & \std{1.65}{0.01} & \std{1.67}{0.00} & \std{1.67}{0.02} & \std{1.66}{0.01} & \underline{\std{1.62}{0.02}} & \textbf{\std{1.52}{0.02}} & \std{1.56}{0.01} & \std{1.60}{0.01} & \std{1.56}{0.02} & \underline{\std{1.53}{0.01}} & \std{1.54}{0.01} & \textbf{\std{1.43}{0.03}} \\
crab & \std{5.12}{0.10} & \std{5.22}{0.05} & \underline{\std{5.11}{0.15}} & \std{5.24}{0.18} & \std{5.16}{0.04} & \textbf{\std{5.07}{0.07}} & \underline{\std{4.47}{0.04}} & \textbf{\std{4.45}{0.02}} & \std{4.88}{0.02} & \std{4.88}{0.09} & \std{4.58}{0.04} & \std{4.47}{0.02} \\
insurance ($\times 10^{7}$) & \std{2.82}{0.00} & \std{2.66}{0.00} & \std{2.72}{0.18} & \std{2.52}{0.10} & \underline{\std{2.51}{0.05}} & \textbf{\std{2.45}{0.07}} & \std{1.98}{0.05} & \std{1.97}{0.04} & \std{2.01}{0.08} & \textbf{\std{1.89}{0.02}} & \underline{\std{1.90}{0.01}} & \std{1.90}{0.07} \\
\midrule
\textbf{Avg. Rank} ($\downarrow$) & 4.50 & 4.07 & 3.93 & 3.50 & \underline{3.11} & \textbf{1.82} & 3.46 & 3.86 & 4.57 & 3.93 & \underline{3.39} & \textbf{1.75} \\
\textbf{Avg. Imp.} ($\uparrow$) & / & / & / & / & / & / & \underline{+6.27\%} & +4.98\% & +4.96\% & +3.84\% & +5.13\% & \textbf{+7.03\%} \\
\bottomrule
\end{tabular}
}
\end{table*}

\noindent \textbf{Baselines}.
We evaluate CoFEH against five representative baselines, spanning both traditional and LLM-based methodologies:
--- \textit{Two traditional automated FE methods}: 
(i) \textbf{OpenFE}~\cite{openfe_zhang2023openfe} utilizes feature boosting and two-stage pruning for automated feature generation;
(ii) \textbf{Mindware}~\cite{rb_li2020efficient}, an end-to-end AutoML system that automates the entire pipeline from preprocessing to HPO with BO;
--- \textit{Three LLM-based FE methods}:
(iii) \textbf{OCTree}~\cite{octree_nam2024optimized} leverages LLMs and decision tree reasoning as linguistic feedback for iterative feature generation;
(iv) \textbf{ELLM-FT}~\cite{ellm_gong2025evolutionary} integrates evolutionary strategies with LLMs to discover optimal feature transformation sequences;
and (v) \textbf{LFG}~\cite{lfg_10.24963/ijcai.2025/782} conducts a guided tree search over a predefined operation set to generate new features via LLMs.

\noindent \textbf{Datasets and Metrics}.
Following OCTree~\cite{octree_nam2024optimized}, we incorporate 19 classification datasets benchmarked by Grinsztajn et al.~\cite{grinsztajn2022tree}. 
This is supplemented by 9 regression datasets from OpenML~\cite{bischl2025openml} and Kaggle. 
Dataset details are provided in Appendix~\ref{app:datasets}.
We use the classification error ($1-accuracy$) for classification and $mean\ squared\ error$ (MSE) for regression tasks as performance metrics.

\noindent \textbf{Downstream Models}.
We evaluate the frameworks across three downstream model scenarios:
(i) XGBoost~\cite{xgboost_chen2016xgboost},
(ii) MLP~\cite{mlp_gorishniy2021revisiting},
and
(iii) CASH~\cite{autoweka_thornton2013}: the framework must simultaneously select the best model algorithm~(e.g., RF, MLP) and its optimal parameters.
The hyperparameter spaces for all scenarios are detailed in Appendix~\ref{app:hpo_space}.

\noindent \textbf{Basic Settings}.
While it takes a different amount of time to evaluate the same model on different datasets, we use the evaluation iterations as the unit of budget.
We compare all methods under two settings:
(i) \textbf{Standalone FE}: conduct an FE search for a maximum of 100 iterations while maintaining default downstream model hyperparameters.
(ii) \textbf{Joint FE+HPO}: A total budget of 200 iterations is allocated for full-pipeline optimization. 
% Mindware \textit{searches over its unified FE and HPO space} using SMAC~\cite{smac_hutter2011sequential} (a BO variant).
Mindware \textit{leverages SMAC~\cite{smac_hutter2011sequential} (a BO variant) to simultaneously navigate the unified space of FE and HPO}.
Other decoupled baselines (OpenFE, OCTree, ELLM-FT, and LFG) \textit{execute 100 sequential rounds of FE followed by 100 rounds of HPO on the fixed optimal FE pipeline}.
Critically, to ensure a fair comparison, they employ the same SMAC-based HPO module.
In contrast, \sys performs \textit{interleaved optimization} across the entire 200-round budget.
Each dataset is split into training (60\%), validation (20\%), and test (20\%) sets. Optimization is driven by validation performance, after which the optimal FE pipeline and ML configuration are refitted on the combined training and validation set and evaluated on the test set.
We report the results across \textit{three independent runs} to ensure statistical robustness.

\noindent \textbf{Implementation Details}.
All the baselines are implemented following their official open-source versions or original methodologies. 
Within \sys, we set the exploration constants $C_1 = C_2 = \sqrt{2}$ and configure the dynamic optimizer selector  with initial prior weights $p_1 = 0.9$ and $p_2 = 0.1$, satisfying the conditions required for Theorem~\ref{thm:budget_equilibrium}. 
Our HPO component is also modified from SMAC to be conditioned on FE for collaborative optimization.
Specifically, we characterize each FE pipeline using MindWare's meta-feature set, which consolidates and extends meta-features from prior meta-
  learning studies~\cite{meta1_pfahringer2000meta,meta2_yogatama2014efficient,meta3_bardenet2013collaborative}, with details provided in
  Appendix~\ref{app:meta_features}.
All LLM-based methods use gemini-2.0-flash~\cite{gemini2flash} as the default backbone. 
Appendix~\ref{app:llm_ablation} shows that stronger backbones improve performance, but Gemini-2.0-flash offers the best cost-efficiency with only a remarkably small performance gap.
% Our codes are available in \underline{https://anonymous.4open.science/status/cofeh-159951E}.

\begin{figure*}[t!]
    \centering
    
    % 第一个子图
    \begin{subfigure}[b]{0.34\linewidth} % 宽度稍微留一点余量(0.6 -> 0.58)，防止换行
        \centering
        \includegraphics[width=\linewidth]{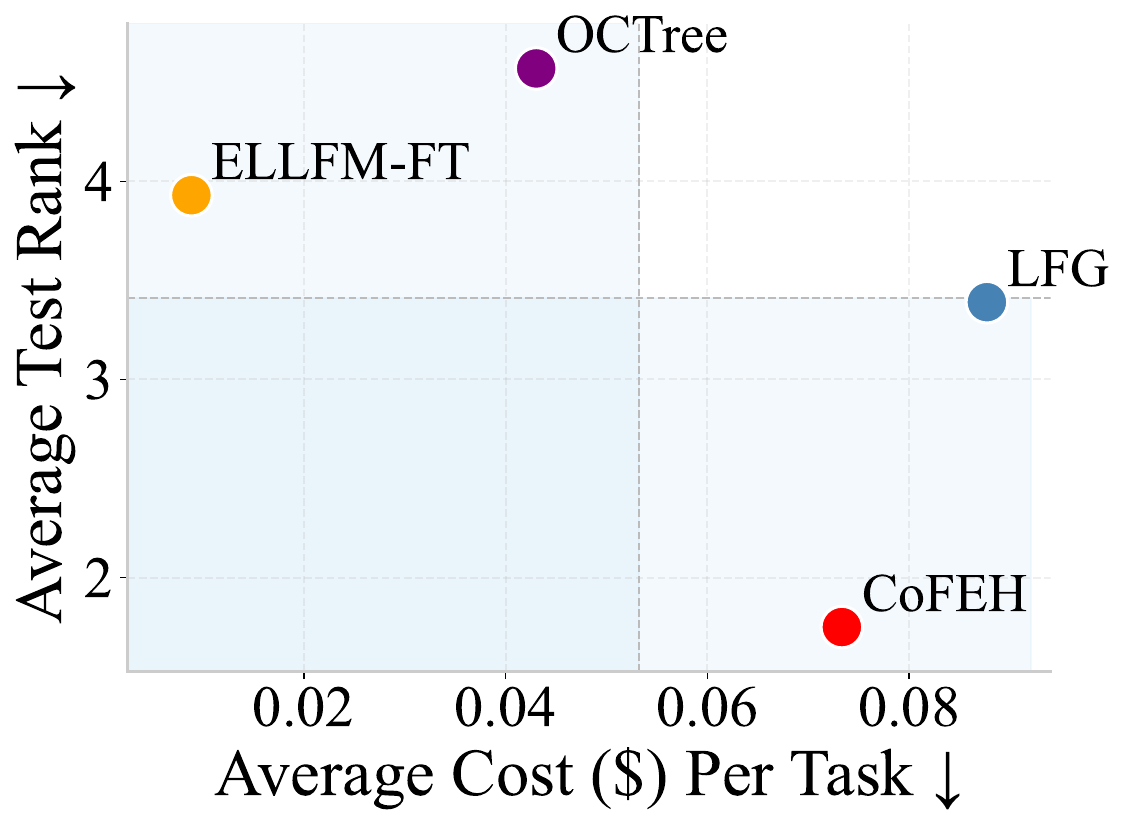}
        \caption{Expense-performance trade-off.}
        \label{fig:money_analysis}
    \end{subfigure}
    \hfill % 加上这个让两张图撑开，分布在左右两端
    % \hspace{3em}
    % 第二个子图
    \begin{subfigure}[b]{0.65\linewidth} % 宽度稍微留一点余量(0.4 -> 0.38)
        \centering
        \includegraphics[width=\linewidth]{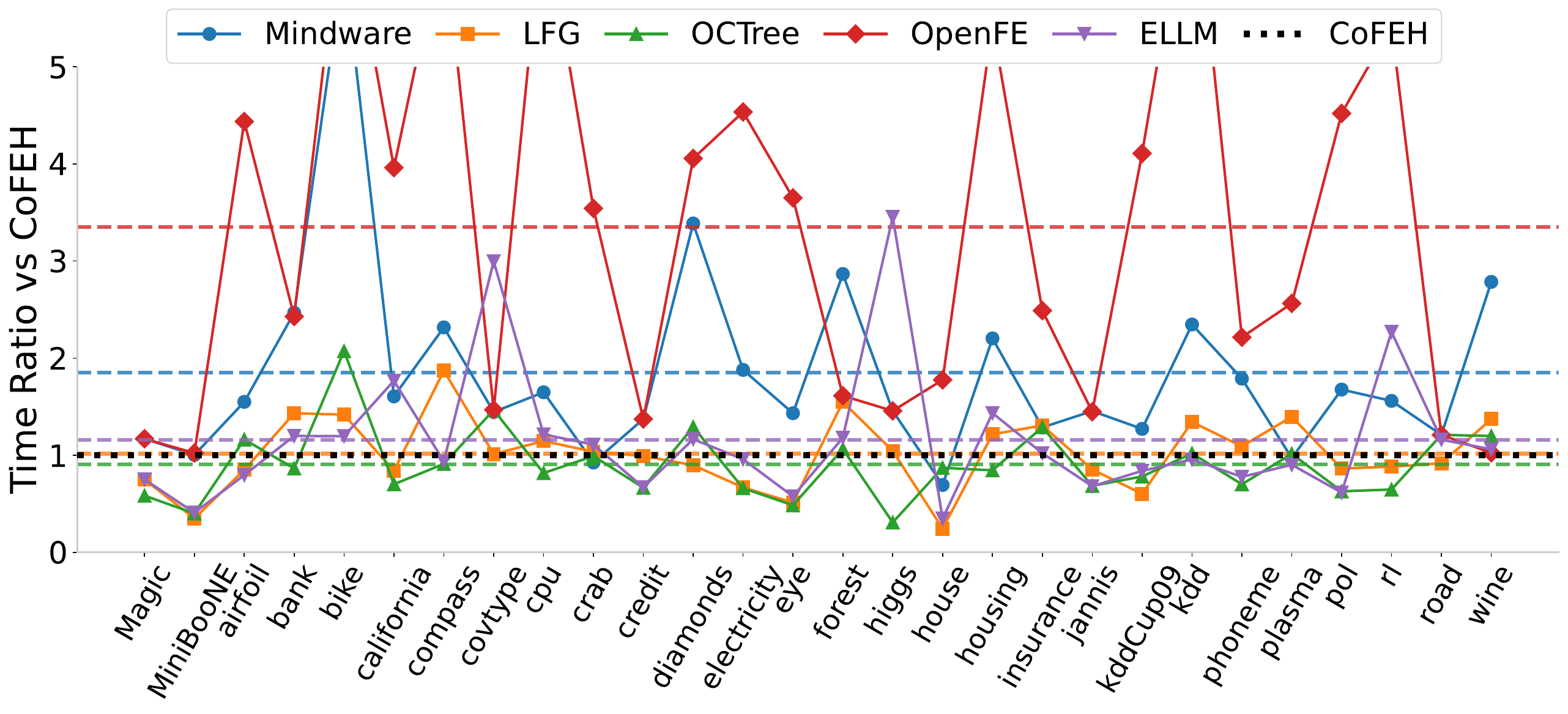}
        \caption{Per‑dataset runtime comparison (time ratio vs \sys).}
        \label{fig:time_analysis}
    \end{subfigure}
    
    \vspace{-0.5em}
    \caption{Cost analysis of the main experiment.}
    \label{fig:cost_analysis} % 记得给主图加个 label
\end{figure*}

\subsection{Main Results}
\label{sec:exp_main_results}
We first evaluate \sys and five baselines using XGBoost as the downstream ML model. The results across both standalone FE and joint FE+HPO scenarios are presented in Table \ref{tab:main_results_no_std}. Several key observations emerge from the data:
(i) \textbf{LLM-based FE vs. traditional FE}s: In the standalone FE scenario, traditional methods generally underperform compared to LLM-based approaches, with Mindware (4.52) and OpenFE (4.07) occupying the bottom average ranks. As discussed in Section \ref{sec:background_fe}, traditional frameworks are restricted to predefined, hardcoded operation sets and fixed, linear workflows.
(ii) \textbf{Performance of LLM baselines}: Among LLM-based baselines, OCTree exhibits the weakest performance (Rank 3.93). This is primarily due to its linear reasoning structure, which lacks a backtracking mechanism. ELLM (3.54) and LFG (3.11) achieve better results by employing genetic optimization and Tree-of-Thought strategies, respectively.
(iii) \textbf{Dominance of \sys in standalone FE}: \sys achieves a state-of-the-art Average Rank of 1.84, significantly outperforming the runner-up, LFG (3.11). 
A key strength lies in its ability to balance exploration and exploitation with a memory mechanism, as further validated by the ablation in Appendix~\ref{app:ee}.
Moreover, unlike other LLM baselines that focus strictly on individual feature generation, \sys enables truly free pipelines.
% We provide a \textbf{case study} of the optimal FE pipelines discovered by each method in Appendix \ref{app:FE_pipeline_example}.
The \textit{airfoil} case study in Appendix~\ref{app:FE_pipeline_example} further illustrates the qualitative difference. 
\sys synthesizes physically meaningful features such as a Strouhal-like number $St=f\cdot c/U$, geometry terms $\sin a$, and their interaction, then composes them with Yeo-Johnson transformation, standardization, and SelectKBest. 
This yields a heterogeneous pipeline spanning transformation, generation, preprocessing, and selection, whereas LLM baselines mainly remain in feature generation and OpenFE expands the feature set through many arithmetic candidates.
(iv) \textbf{Sequential vs. joint optimization}: In the joint FE+HPO scenario, the relative performance of baselines shifts significantly. Decoupled sequential baselines (OpenFE, OCTree, ELLM, and LFG) exhibit lower improvement compared to standalone FE—ranging from 3.84\% to 5.13\%—as they freeze the FE pipeline before HPO, often trapping the system in a sub-optimal state. In contrast, Mindware achieves a notable 6.27\% improvement by performing joint tuning over a unified search space. This holistic approach allows Mindware to overcome its poor FE performance (Rank 4.52), climbing to third place (Rank 3.46) in the end-to-end scenario.
(v) \textbf{Superiority of \sys in joint tuning}: Through combining FE and HPO, \sys achieves a dominant improvement rate of 7.03\% over standalone FE, further widening its lead with an Average Rank of 1.75. This success stems from our interleaved tuning strategy and a conditioned mechanism that facilitates mutual feedback between FE and HPO.
In summary, \sys not only establishes a new benchmark for FE but also achieves a powerful ``strong-strong'' synergy in joint tuning. 
% Our results demonstrate that the collaborative bridge between feature and model spaces is the key to unlocking peak performance.
Appendix~\ref{app:dataset_analysis} highlights \sys's robust scalability as dataset scale increases.
We further provide a Friedman test with Nemenyi post-hoc analysis in Appendix \ref{app:stat_analysis} to validate the consistent superiority of \sys.

% \noindent 
% \textbf{Cost analysis.}
% Appendix~\ref{app:cost_analysis} reports API and time costs for all methods: \sys averages only \$0.07 per task, with runtime comparable to LLM-based baselines and better than traditional methods. 

\noindent\textbf{Cost and runtime.}
Fig.~\ref{fig:money_analysis} compares the API expense and average test rank of LLM-based methods. \sys achieves the best cost--performance
  trade-off, attaining the best average rank with only \$0.07 per task; in contrast, LFG obtains the second-best rank but incurs the highest API
  cost, while OCTree and ELLM are cheaper but less effective. 
Fig.~\ref{fig:time_analysis} further compares Per-datase end-to-end runtime normalized by \sys.
  LLM-based baselines have comparable runtimes, whereas traditional search-heavy methods are much slower: OpenFE and Mindware require on average
  3.35$\times$ and 1.85$\times$ the runtime of \sys, respectively. 
This efficiency gap is partly due to feature explosion in traditional automated
  search: for example, OpenFE expands the ``airfoil'' dataset to 127 dimensions, whereas \sys and other reasoning-based methods typically keep the
  representation below 20 dimensions (Appendix~\ref{app:FE_pipeline_example}). 
This highlights the practical value of LLM-based FE: semantic reasoning can avoid the combinatorial accumulation of redundant features and
  thereby reduce the cost of repeated downstream evaluations.
Averaged over the 28 datasets, evaluation dominates the end-to-end runtime, taking 1842.23s (72\%), followed by LLM queries
  at 537.3s (21\%), BO at 127s (5\%), and MCTS bookkeeping at only 51.2s (2\%).

\begin{table}[t!]
\centering
\caption{Generalization performance ($\text{Mean} \pm \text{Std}$) across CASH and MLP scenarios. Imp. (\%) denotes the relative error reduction of \sys\ over the best baseline.}
\label{tab:generalization_imp}
\setlength{\tabcolsep}{4pt}
\small
\begin{tabular}{lcccc}
\toprule
\textbf{Dataset} & \textbf{LFG} & \textbf{Mindware} & \textbf{\sys} & \textbf{Imp. ($\uparrow$)} \\
\midrule
% --- CASH Scenario ---
\rowcolor{gray!15} \multicolumn{5}{c}{\textit{CASH Scenario}} \\
pol                & \underline{\std{1.70}{0.15}} & \std{1.87}{0.15} & \textbf{\std{1.38}{0.10}} & +18.9\% \\
wine               & \std{22.13}{0.27} & \underline{\std{20.89}{0.43}} & \textbf{\std{18.86}{0.39}} & +9.7\% \\
airfoil\_self\_noise & \underline{\std{2.72}{0.14}} & \std{2.88}{0.09} & \textbf{\std{1.49}{0.11}} & +45.1\% \\
housing ($\times 10^9$) & \underline{\std{2.12}{0.08}} & \std{2.12}{0.02} & \textbf{\std{1.78}{0.03}} & +16.0\% \\
\midrule
% --- MLP Scenario ---
\rowcolor{gray!15} \multicolumn{5}{c}{\textit{MLP Scenario}} \\
pol                & \std{1.53}{0.08} & \underline{\std{1.42}{0.07}} & \textbf{\std{1.36}{0.12}} & +4.6\% \\
wine               & \underline{\std{22.94}{0.49}} & \std{22.98}{0.36} & \textbf{\std{20.63}{0.21}} & +10.1\% \\
airfoil\_self\_noise & \underline{\std{10.49}{3.28}} & \std{10.59}{2.58} & \textbf{\std{6.73}{1.34}} & +35.9\% \\
housing ($\times 10^9$) & \underline{\std{2.70}{0.03}} & \std{2.77}{0.06} & \textbf{\std{2.66}{0.05}} & +1.3\% \\
\midrule
\rowcolor{gray!5}
\textbf{Avg. Rank} ($\downarrow$) & \underline{2.25} & 2.75 & \textbf{1.00} & -- \\
\bottomrule
\end{tabular}
\end{table}

\subsection{Generalization to Downstream Models}

While the preceding experiments demonstrate the efficacy of \sys using XGBoost, a robust FE framework should ideally exhibit model-agnostic generalization. 
In this section, we evaluate whether \sys can collaboratively optimize FE and HPO within two distinct downstream model types: deep learning (MLP) and multi-algorithm search (CASH).
To mitigate API overhead, we select four representative datasets from the original benchmark: two classification tasks (pol, wine) and two regression tasks (airfoil, housing). We evaluate \sys against the two strongest baselines in joint FE+HPO scenario from Table \ref{tab:main_results_no_std}: LLM-based LFG and traditional Mindware.

The results in Table \ref{tab:generalization_imp} demonstrate the robust generalization of \sys across three dimensions: (1) \textbf{Consistent superiority}: \sys consistently achieves the top performance across all datasets and scenarios with a perfect Average Rank of 1.00, maintaining a ranking hierarchy identical to the primary XGBoost experiments. 
(2)\textbf{ Significant error reduction}: Compared to the strongest baseline in each task, \sys achieves substantial relative error reductions, most notably reaching 45.1\% in the CASH scenario and 35.9\% in the MLP scenario.
(3) \textbf{Enhanced performance ceiling}: Compared with Table \ref{tab:main_results_no_std}, the end-to-end performance in the CASH scenario generally surpasses that of XGBoost, indicating that \sys effectively leverages multi-algorithm search to identify models that are more compatible with the synthesized features.

\subsection{Effectiveness of Collaborative Tuning}

In this section, we evaluate the impact of the collaborative tuning framework, which comprises two components: the mutual conditioning mechanism for bidirectional information exchange between FE and HPO, and the dynamic optimizer selector for resource scheduling. 
To isolate their contributions, we conduct an ablation study comparing the full \sys against three variants: 
(i) w/o Cond, where HPO feedback and meta-feature conditioning are removed, forcing both modules to generate independent proposals that are heuristically paired with the counterpart's current global best for evaluation; 
(ii) w/o Selector, which replaces dynamic scheduling with a fixed, alternating execution strategy; 
and (iii) Greedy, the sequential paradigm that follows the standard practice of performing 100 iterations of FE followed by 100 iterations of HPO. 
As illustrated in Fig.~\ref{fig:aba_colla}, which plots the average normalized best-so-far validation performance across four representative datasets, the Greedy approach yields the poorest results, confirming the sub-optimality of decoupled optimization.
We find that ablating either core mechanism significantly degrades convergence.
Ultimately, \sys achieves the best Average Test Rank of 1.25, substantially outperforming the w/o Selector (2.25), w/o Cond (2.75) variants, confirming the necessity of coupled information-resource management.
We next analyze the two components separately.

\noindent\textbf{Effect of mutual conditioning.}
The w/o Cond variant removes HPO feedback and FE-state conditioning, preventing BO from distinguishing how the same hyperparameter configuration
  behaves under different transformed datasets. The surrogate-fitting analysis in Appendix~\ref{app:meta_fit} provides direct evidence: adding FE
  meta-features raises the mean Spearman correlation between BO predictions and true performance from 0.587 to 0.691 across the 28 datasets, with
  especially large gains on FE-sensitive tasks.

\begin{figure}[t!]
  \centering
  \includegraphics[width=0.8\linewidth]{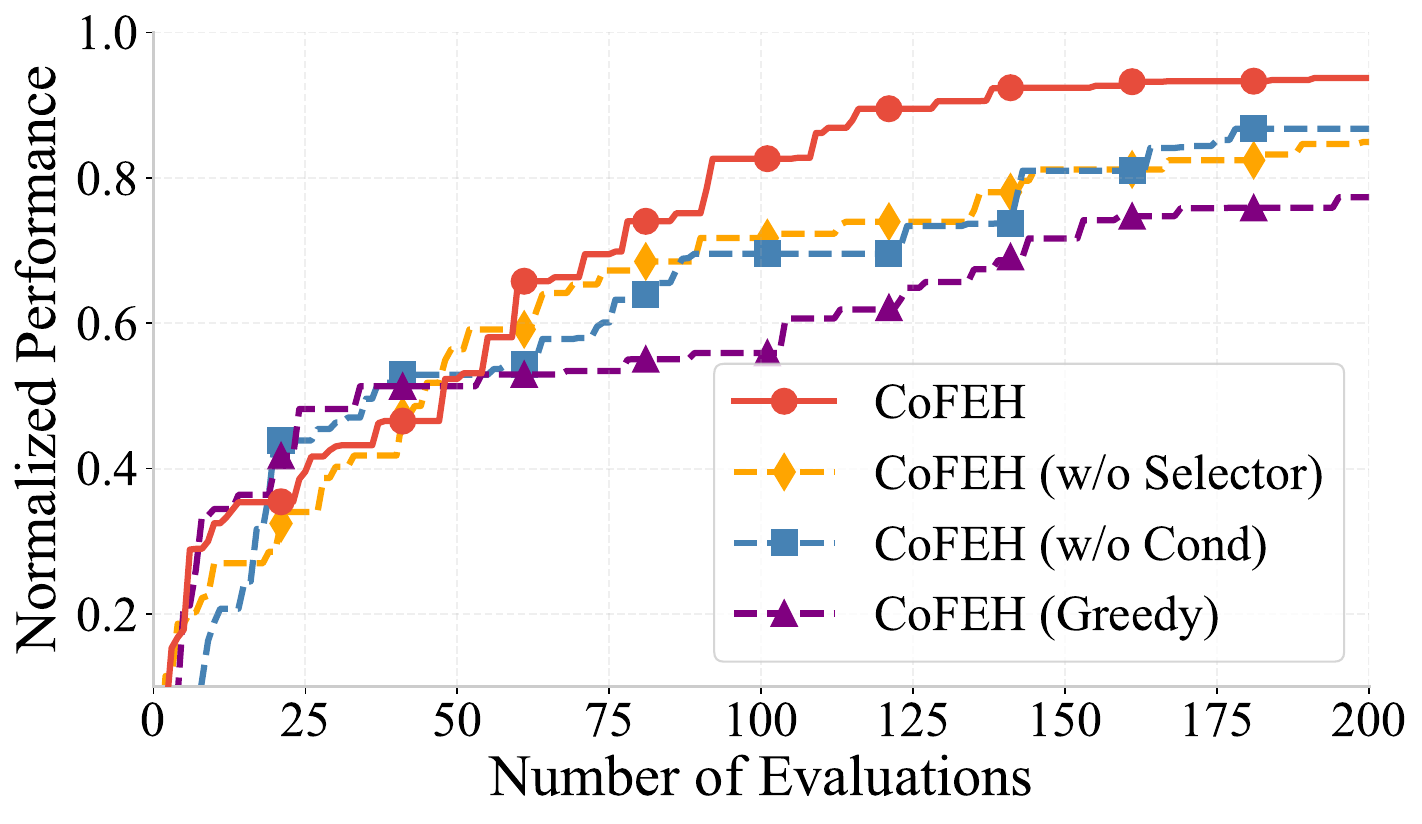}
  \caption{Ablation study of collaborative tuning.}
  \label{fig:aba_colla}
\end{figure}

\begin{figure}[t]
    \centering
    
    % 第一个子图
    \begin{subfigure}[b]{0.49\linewidth} % 宽度稍微留一点余量(0.6 -> 0.58)，防止换行
        \centering
        \includegraphics[width=\linewidth]{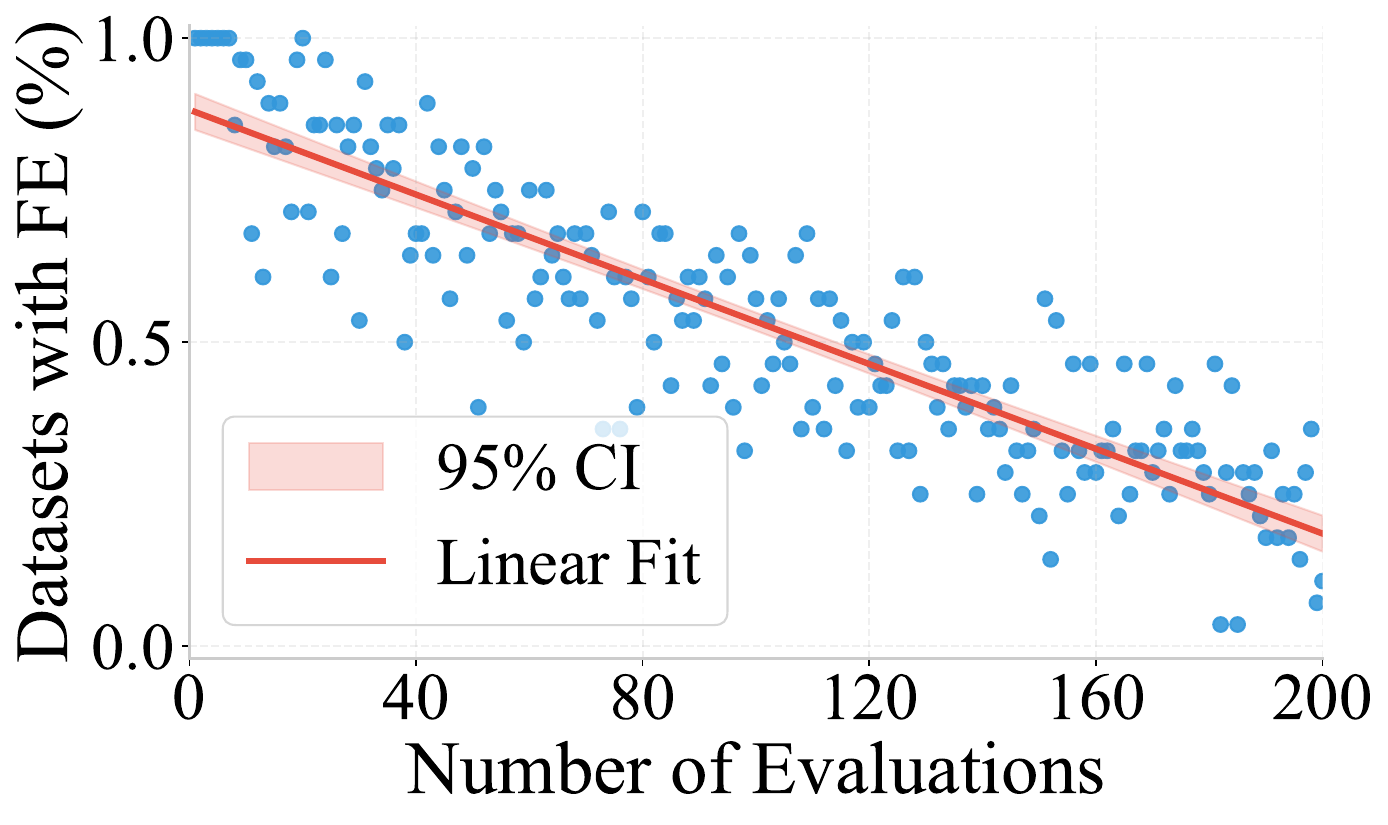}
        \caption{FE prop. across iterations.}
        \label{fig:fe_ratio_curve}
    \end{subfigure}
    \hfill % 加上这个让两张图撑开，分布在左右两端
    % \hspace{3em}
    % 第二个子图
    \begin{subfigure}[b]{0.49\linewidth} % 宽度稍微留一点余量(0.4 -> 0.38)
        \centering
        \includegraphics[width=\linewidth]{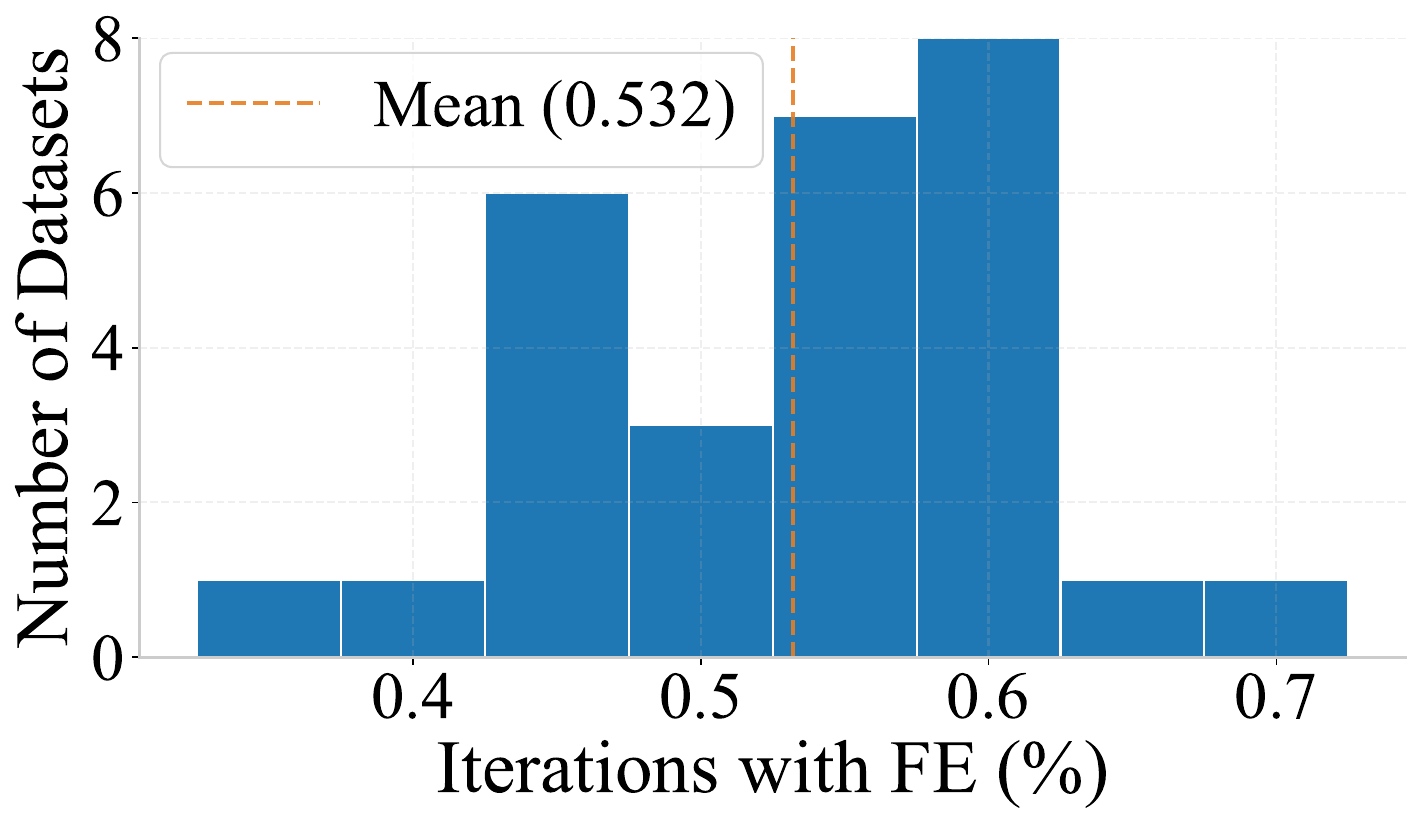}
        \caption{FE prop. across datasets.}
        \label{fig:fe_ratio_hist}
    \end{subfigure}
    
    % \vspace{-0.5em}
    \caption{FE prop. driven by the dynamic optimizer selector}
    \label{fig:budget_allocation} % 记得给主图加个 label
\end{figure}

\noindent\textbf{Effect of dynamic optimizer selection.}
To further analyze the behavior of the optimizer selector, Fig.~\ref{fig:budget_allocation} examines the distribution of FE iterations across the 28 datasets in the main experiment.
(i) \textbf{Temporal dynamics}: As shown in Fig.~\ref{fig:fe_ratio_curve}, the proportion of datasets executing FE follows a clear downward trend. This aligns with our design intuition of prior weights: prioritizing structural data refinement in early stages. Notably, both optimizers remain active throughout the process, ensuring continuous co-evolution.
(ii) \textbf{Task-adaptive allocation}: Fig.~\ref{fig:fe_ratio_hist} illustrates the distribution of total FE iterations per dataset. While the mean proportion of 0.53 empirically substantiates the budget equilibrium predicted in Theorem~\ref{thm:budget_equilibrium}, the significant variance (ranging from 0.37 to 0.7) highlights the framework's task-adaptivity.
This reflects the selector’s ability to detect whether a specific task is more sensitive to feature representations or ML configurations. 
Appendix~\ref{sec:fe_hpo_case} provides a case study of task-adaptive allocation.
We compares independent pure-FE and pure-HPO runs on two representative datasets to test whether the selector's
  allocation matches each task's dominant optimization driver. 
On ``diamonds'', pure FE achieves a higher ceiling than pure HPO, and the selector
  correspondingly allocates 70\% of the budget to FE. 
On ``jannis'', pure HPO is more effective, and the selector reduces the FE share to 37\%.
  These contrasting cases show that the selector adapts the FE/HPO budget according to the task-specific marginal utility of each optimizer.

\section{Conclusion}

In this paper, we presented \sys, a collaborative framework that bridges semantic FE discovery and numerical HPO through adaptive scheduling and mutual conditioning. 
Experiments across 28 public datasets show that \sys achieves superior performance over traditional AutoML and LLM-based baselines in both
standalone FE and joint FE+HPO settings.
The framework is also extensible: 
(i) it is agnostic to specific HPO implementations and maintains full compatibility with state-of-the-art BO variants; 
(ii) it enables seamless extension to image and text data via simple prompt modifications, offering a scalable foundation for future work.

\begin{acks}
  This work is supported by National Natural Science Foundation of China (U23B2048, U22B2037). Bin Cui is the corresponding author.
\end{acks}

\clearpage

\bibliographystyle{ACM-Reference-Format}
\bibliography{reference}

%%
%% If your work has an appendix, this is the place to put it.
\appendix

% \section{Method Details}

\section{Prompt Design}
\label{app:prompt_design}

This appendix details our prompt engineering framework, spanning the expert-persona System Prompt, the structured User Prompt, and a concrete input/output example.

\subsubsection{System Prompt}
The System Prompt establishes the LLM's expert role and core mission:
\begin{PromptBox}[promptGray]{System Prompt}
    % \begin{promptsection}{promptGray}
    You are an elite Machine Learning Feature Engineering (FE) Expert whose mission is to analyze data distributions and synthesize high-value pipelines by generating innovative feature operations specifically designed to maximize the predictive performance of downstream ML models.
    % \end{promptsection}
\end{PromptBox}

\subsubsection{Structured User Prompt}
The User Prompt comprises the global task description and the four core components defined in Eq.~\eqref{eq:llm_reasoning}, which are detailed below:

\noindent \textbf{(i) Task description} defines the objective and output format:
\begin{PromptBox}{Template: Task Description} 
\begin{promptsection}{taskblue} 
Given the dataset metadata, transformation history, and elite memory, your task is to synthesize novel feature engineering operations to maximize the performance of the downstream model $\texttt{\{}$…$\texttt{\}}$. You should provide a reasoning insight followed by executable Python code. 

\textbf{Output Format:}
You must provide your insight followed by the code in the following structure:

--- Reason: (Explain why this specific transformation is beneficial for the downstream model.)

--- Way: (Detail the exact columns and the specific method required for the transformation.)

--- Implementation: (Provide a standalone Python function. Use the following template)
\begin{verbatim}
def your_fe_name(df):
    '''A concise description of the operator.'''
    import pandas as pd
    import numpy as np
    # Your implementation here
    return df
\end{verbatim}
\end{promptsection} 
\end{PromptBox}

\noindent\textbf{(ii) Dataset information ($\psi$)}.
The Dataset Information component provides a comprehensive data profile, grounding the LLM’s reasoning in both statistical metadata and semantic context.
To ground the LLM’s reasoning in the dataset's specific characteristics, we transform the raw data into a structured dataset information block using a multi-level extraction logic:

\begin{itemize}[leftmargin=2em, topsep=6pt, itemsep=6pt]
\item \textbf{Global profile}: Captures the dataset scale (samples/features), task type (e.g., regression/classification), the numerical-to-categorical ratio, and the semantic meaning of the dataset (if such a description is available), e.g., what the records represent and what the target variable measures.
\item \textbf{Missingness \& quality analysis}: We calculate the missing ratio for every column and flag "high-missing" features that exceed a predefined threshold (e.g., 30\%).
\item \textbf{Heuristic key feature selection}: To manage the LLM's context window, we score features based on their missingness, presence of domain descriptions, and types. We then prioritize the Top-$K$ most informative features for detailed display.
\item \textbf{Feature-Level metadata}: Distills specific statistics based on the feature type:
\begin{itemize}[leftmargin=2em, topsep=3pt, itemsep=5pt]
\item Categorical: Includes the number of unique classes and the frequency of the Top-3 categories.
\item Numerical/discrete: Includes statistical moments (mean, standard deviation) and the value range (min, max).
\end{itemize}
We append human-readable notes (e.g., feature meanings) to the metadata whenever domain knowledge is available.
\end{itemize}

\begin{PromptBox}[taskblue]{Template: Dataset Information} 
% \begin{promptsection}{taskblue} 
\textbf{[Dataset Profile]} 
$\texttt{\{}$dataset scale, …$\texttt{\}}$ \\
\textbf{Target:} $\texttt{\{}$…$\texttt{\}}$ \\
\textbf{All columns:} $\texttt{\{[}$…$\texttt{]\}}$

\textbf{[Feature Summary]} 
$\texttt{\{}$highlights data quality, specifically identifying columns with high missing-value ratios.$\texttt{\}}$

\textbf{[Key Features]} detailed metadata for each feature:

-- \texttt{col1}: type=numerical, missing=$\{\%\}$, mean=$\{\mu\}$, std=$\{\sigma\}$, range=($\{min, max\}$), note=$\{description\}$

-- \texttt{col2}: type=categorical, missing=$\{\%\}$, classes=$\texttt{\{[}$…$\texttt{]\}}$, top=$\{[(val, freq), \dots]\}$, note=$\{description\}$

-- \dots (repeated for prioritized features)

% \end{promptsection}
\end{PromptBox}

\noindent \textbf{(iii) Ancestor FE pipeline ($T_{\text{anc}}$)}
summarizes the FE pipelines from the original dataset to the currently selected node:

\begin{PromptBox}[taskblue]{Template: Ancestor FE Pipeline}
% \begin{promptsection}{taskblue}
\textbf{Current Feature Engineering pipelines:} \\
$\rightarrow$ \\
\textbf{[Do nothing]} \\
\hspace*{1em} $\llcorner$ Score: $\{Score_0\}$ \\
$\rightarrow$ \\
\textbf{[FE Operation 1]} \\
\hspace*{1em} $\vdash$ \textbf{Reason:} $\{Reason_1\}$ \\
\hspace*{1em} $\vdash$ \textbf{Way:} $\{Method_1\}$ \\
\hspace*{1em} $\llcorner$ \textbf{Score:} $\{Score_1\}$ \\
$\rightarrow$ \\
\dots \\
$\rightarrow$ \\
\textbf{[FE Operation $k$]} \\
\hspace*{1em} $\vdash$ \textbf{Reason:} $\{Reason_k\}$ \\
\hspace*{1em} $\vdash$ \textbf{Way:} $\{Method_k\}$ \\
\hspace*{1em} $\llcorner$ \textbf{Score:} $\{Score_k\}$
% \end{promptsection}
\end{PromptBox}

\noindent \textbf{(iv) Memory of good FE operations ($\mathcal{M}_{s_\text{base}}$)}
shows good feature engineering operations from history that you can reference in the next step:

\begin{PromptBox}[taskblue]{Template: Memory of Good FE Pipelines}
% \begin{promptsection}{taskblue}
\textbf{High-performing historical FE operations (memory):} \\
\textbf{[Good Operation 1]} \\
\hspace*{1em} $\vdash$ \textbf{Reason:} $\{Reason_1\}$ \\
\hspace*{1em} $\vdash$ \textbf{Way:} $\{Method_1\}$ \\
\hspace*{1em} $\llcorner$ \textbf{Score:} $\{Score_1\}$, \textbf{relative improve:} $\{RelImprove_1\}$ \\
\textbf{[Good Operation 2]} \\
\hspace*{1em} $\vdash$ \textbf{Reason:} $\{Reason_2\}$ \\
\hspace*{1em} $\vdash$ \textbf{Way:} $\{Method_2\}$ \\
\hspace*{1em} $\llcorner$ \textbf{Score:} $\{Score_2\}$, \textbf{relative improve:} $\{RelImprove_2\}$ \\
\dots \\
\textbf{[Good Operation $m$]} \\
\hspace*{1em} $\vdash$ \textbf{Reason:} $\{Reason_m\}$ \\
\hspace*{1em} $\vdash$ \textbf{Way:} $\{Method_m\}$ \\
\hspace*{1em} $\llcorner$ \textbf{Score:} $\{Score_m\}$, \textbf{relative improve:} $\{RelImprove_m\}$
% \end{promptsection}
\end{PromptBox}

\noindent\textbf{(v) Directive and Optimization Objectives}

Depending on the directive $d$, the LLM is given optimization objectives that explicitly steer its feature engineering strategy. The directive governs whether the LLM should focus on generating an initial strong operation, exploring novel transformations, or exploiting previously successful experience.

\begin{PromptBox}[taskblue]{Template: Directive and Objectives}
    % --- 通用目标 ---
    % \begin{promptsection}{taskblue}
        % \textbf{Directive:} \{d\} $\in$ \{\textsc{Initialization}, \textsc{Exploration}, \textsc{Exploitation}\}

        \textbf{Your general objectives (for any directive $d$):}\\
        1. Review and summarize the existing feature engineering operations from the original dataset to the currently selected node in the ancestor FE pipeline. \\
        2. Avoid duplicating the feature engineering approaches that have already been attempted in the ancestor FE pipeline. \\
        3. Propose one new feature engineering step. You may reference (but are not limited to) the following categories: \\
        \hspace*{1em} -- Generator: creating new features. \\
        \hspace*{1em} -- Selector: choosing or filtering the most relevant features. \\
        \hspace*{1em} -- Transformation: modifying distributions or encoding categories. \\
        \hspace*{1em} -- Rescaler: normalizing or standardizing features. \\
        \hspace*{1em} -- Imputer: filling missing data via statistical or model-based methods.
    % \end{promptsection}

    \vspace{8pt}

    4. Your strategy is \textbf{\texttt{\{d\}}} (further specifies the optimization mode and how the above objectives should be prioritized):

    \vspace{8pt}

    % --- Initialization ---
    \begin{strategysection}{warmuporange}{warmupborder}
        \textbf{Directive: \textsc{Initialization}} (for root node $s_0$) \\
        \textbf{Instruction}: Propose a \textbf{high-quality initial} FE operation for the original dataset. Focus on robust, broadly useful transformations rather than highly specialized or overly complex operations. Aim to establish a strong baseline that later operations can further improve upon.
    \end{strategysection}

    \vspace{6pt}

    % --- Exploration ---
    \begin{strategysection}{exploreyellow}{exploreborder}
        \textbf{Directive: \textsc{Exploration}} \\
        \textbf{Instruction}: Propose FE operations that \textbf{explores new regions} of the transformation space. Prioritize novel or less-explored ideas that are \textbf{distinct from existing and previously attempted methods}. Encourage diversity and bold changes to uncover potentially high-performing operations.
    \end{strategysection}

    \vspace{6pt}

    % --- Exploitation ---
    \begin{strategysection}{exploityellow}{exploitborder}
        \textbf{Directive: \textsc{Exploitation}} \\
        \textbf{Instruction}: Propose FE operations that \textbf{refines and exploits} previously successful operations. Try to \textbf{reuse, adapt, or combine} high-performing historical FE operations from the memory to further improve performance. Focus on \textbf{incremental refinement} rather than broad exploration.
    \end{strategysection}
\end{PromptBox}

\section{Proof of Theorem}
\label{app:proof}

In this appendix, we provide a detailed proof of Theorem~\ref{thm:budget_equilibrium}.
\begin{proof}

To establish the budget equilibrium property, we first formalize the conditions provided in the theorem. We assume a neutral reward signal $Q(\text{FE}) = Q(\text{HPO}) = \text{const}$ and an exploration constant $C_2 > 0$. The total iteration budget $M$ is an integer, and the prior weights are governed by the following linear schedules:
\begin{equation*}
\omega_{\text{FE}}(m) = p_1 - \delta m, \quad \omega_{\text{HPO}}(m) = p_2 + \delta m, \quad  \delta = \frac{p_1 - 0.5}{M},
\end{equation*}
where $p_1$ is subject to the boundedness constraint $0.5 \le p_1 < \frac{M+1.5}{M+3}$.

\noindent{\large \textbf{1. Simplification of the Decision Rule}}

Under the neutral reward assumption, the PUCT-based selection rule in Eq.~\eqref{eq:puct_selector} reduces to maximizing the prior-weighted exploration term. Specifically, the framework selects the action $a$ that maximizes $\omega_a(m) \frac{\sqrt{\sum N_{a'}}}{1 + N_a}$. Since the term $\sqrt{\sum N_{a'}}$ is identical for both arms at any given iteration $m$, the selection indicator $I(m+1)$ for FE is defined as:
$$
I(m+1) = 
\begin{cases} 
1, & \text{if } \frac{\omega_{\text{FE}}(m)}{1+N_{\text{FE}}(m)} > \frac{\omega_{\text{HPO}}(m)}{1+N_{\text{HPO}}(m)} \\
0, & \text{otherwise}
\end{cases}
,$$
where $N_{\text{FE}}(m)$ and $N_{\text{HPO}}(m)$ are the cumulative counts of FE and HPO selections, respectively, such that $N_{\text{FE}}(m) + N_{\text{HPO}}(m) = m$.
We define an auxiliary state function $Q(m)$ as:
$$
Q(m) = \omega_{\text{FE}}(m)\bigl(1+N_{\text{HPO}}(m)\bigr) - \omega_{\text{HPO}}(m)\bigl(1+N_{\text{FE}}(m)\bigr).
$$
Thus the decision rule is equivalent to:
$$
I(m+1) = 
\begin{cases} 
1, & \text{if } Q(m) > 0 \\
0, & \text{otherwise}
\end{cases}
,$$

\noindent{\large \textbf{2. Derivation of the Recurrence Relation}}

First, we establish the recurrence relation for $Q(m)$.
Given the linear definitions $\omega_{\text{FE}}(m) = p_1 - \delta m$ and $\omega_{\text{HPO}}(m) = p_2 + \delta m$, it follows that:
$\omega_{\text{FE}}(m+1) = \omega_{\text{FE}}(m) - \delta$ and $\omega_{\text{HPO}}(m+1) = \omega_{\text{HPO}}(m) + \delta$. 
The counts for the next iteration are updated based on the indicator $I(m+1)$ as $N_{\text{FE}}(m+1) = N_{\text{FE}}(m) + I(m+1)$ and $N_{\text{HPO}}(m+1) = N_{\text{HPO}}(m) + (1 - I(m+1))$.
Substituting these into the definition of $Q(m+1)$, we have:
\begin{align*}
Q(m+1) &= \omega_{\text{FE}}(m+1)\bigl(1+N_{\text{HPO}}(m+1)\bigr) \\
&\quad\quad - \omega_{\text{HPO}}(m+1)\bigl(1+N_{\text{FE}}(m+1)\bigr) \\
&= (\omega_{\text{FE}}(m) - \delta)\bigl(1 + N_{\text{HPO}}(m) + (1-I(m+1))\bigr) \\
&\quad\quad - (\omega_{\text{HPO}}(m) + \delta)\bigl(1 + N_{\text{FE}}(m) + I(m+1)\bigr)
\end{align*}

Expanding the terms and substituting $N_{\text{FE}}(m) + N_{\text{HPO}}(m) = m$ and $\omega_{\text{FE}} + \omega_{\text{HPO}} = 1$:
\begin{align*}
Q(m+1) &= \omega_{\text{FE}}(m)(1+N_{\text{HPO}}) - \omega_{\text{HPO}}(m)(1+N_{\text{FE}}) \\
&\quad\quad + \omega_{\text{FE}}(m)(1-I(m+1)) - \omega_{\text{HPO}}(m)I(m+1) \\
&\quad\quad - \delta(1+N_{\text{HPO}}+1-I+1+N_{\text{FE}}+I) \\
&= Q(m) + \omega_{\text{FE}}(m) - I(m+1) - \delta(m+3)
\end{align*}

\noindent{\large \textbf{3. Boundedness Lemma ($|Q(m)| < 1$)}}

We prove that for all $m \in \{0, 1, \dots, M\}$, the state function remains bounded within the open interval $|Q(m)| < 1$ via induction.

\noindent \textbf{Base case:} 

At $m = 0$, the cumulative counts are $N_{\text{FE}}(0) = N_{\text{HPO}}(0) = 0$. The state function evaluates to:
$$Q(0) = \omega_{\text{FE}}(0)(1) - \omega_{\text{HPO}}(0)(1) = p_1 - p_2 = 2p_1 - 1.$$
Given the condition $0.5 \le p_1 < \frac{M+1.5}{M+3}$, and knowing that $\frac{M+1.5}{M+3} < 1$ for any $M > 0$, it follows that $0 \le Q(0) < 1$. Thus, $|Q(0)| < 1$ holds.

\noindent \textbf{Inductive step:} 

Assume $|Q(m)| < 1$ for some iteration $m$. We examine the state transition to $Q(m+1)$ based on the decision rule:
% \begin{itemize}[leftmargin=2em, topsep=3pt, itemsep=5pt]

\textbf{Case 1: $Q(m) > 0$.} According to the decision rule, $I(m+1) = 1$ (FE is selected). Substituting it into the recurrence relation:
\begin{align*}
Q(m+1) &= Q(m) + \omega_{\text{FE}}(m) - 1 - \delta(m+3)
\end{align*}
Since $\omega_{\text{FE}}(m) <= 1$ and $\delta(m+3) >= 0$, it is clear that $Q(m+1) \le Q(m) < 1$. To prove the lower bound $Q(m+1) > -1$, we substituting $\omega_{\text{FE}}(m) = p_1 - \delta m$:
\begin{align*}
\omega_{\text{FE}}(m) - 1
&\;-\; \delta(m+3)
 = p_1 - \delta(2m + 3) - 1 \\
&> p_1 - \delta(2M + 3) - 1 \\
&= p_1 - \frac{p_1-0.5}{M}(2M+3) - 1 \\
&= \frac{M+1.5 - (M+3)p_1}{M} - 1.
\end{align*}

Given the theorem's constraint $p_1 < \frac{M+1.5}{M+3}$, we have $M+1.5 - (M+3)p_1 > 0$. Thus 
\begin{equation}
\label{eq:fuzhu}
\omega_{\text{FE}}(m) - 1 - \delta(m+3) > -1.
\end{equation}
As a result, $Q(m+1) > Q(m) - 1.$
Given the condition for Case 1 that $Q(m) > 0$, we have $Q(m+1) > -1$. Combined with the previously established upper bound $Q(m+1) < 1$, we conclude $|Q(m+1)| < 1$.

\textbf{Case 2: $Q(m) \le 0$.} According to the decision rule, $I(m+1) = 0$ (HPO is selected). Substituting $I(m+1) = 0$:
\begin{align*}
Q(m+1) = Q(m) + \omega_{\text{FE}}(m) - \delta(m+3).
\end{align*}
Since $\omega_{\text{FE}}(m) \le p_1 < 1$ and $Q(m) \le 0$, we have $Q(m+1) < 1$. For the lower bound, 
from Eq.~\eqref{eq:fuzhu} in Case 1, we have already established that:
$ \omega_{\text{FE}}(m) - 1 - \delta(m+3) > -1 \implies \omega_{\text{FE}}(m) - \delta(m+3) > 0. $
As a result, $Q(m+1) > Q(m).$
By the inductive hypothesis $Q(m) > -1$, we conclude $Q(m+1) > -1$. Thus, $|Q(m+1)| < 1$ holds for Case 2 as well.
% \end{itemize}

To sum up, $|Q(m)| < 1$ is satisfied for the entire search.

\noindent{\large \textbf{4. Terminal Equilibrium}}

At the terminal iteration $m=M$, the linear priors reach:
$$\omega_{\text{FE}}(M) = p_1 - \frac{p_1-0.5}{M}M = 0.5, \quad \omega_{\text{HPO}}(M) = p_2 + \frac{0.5-p_2}{M}M = 0.5$$
Substituting these into the auxiliary function $Q(M)$:
$$Q(M) = 0.5\bigl(1+N_{\text{HPO}}(M)\bigr) - 0.5\bigl(1+N_{\text{FE}}(M)\bigr) = 0.5\bigl(N_{\text{HPO}}(M) - N_{\text{FE}}(M)\bigr)$$

From the boundedness lemma, we have:
$$|0.5\bigl(N_{\text{HPO}}(M) - N_{\text{FE}}(M)\bigr)| < 1 \implies |N_{\text{HPO}}(M) - N_{\text{FE}}(M)| < 2$$
Since $N_{\text{FE}}(M)$ and $N_{\text{HPO}}(M)$ are integers, their difference $\Delta N = N_{\text{HPO}}(M) - N_{\text{FE}}(M)$ must also be an integer such that $\Delta N \in \{ -1, 0, 1 \}$. We now consider the parity of the total budget $M$:

\begin{itemize}[leftmargin=2em, topsep=3pt, itemsep=5pt]

\item \textbf{Case A (Even $M$):} If $M$ is even, then $\Delta N = M - 2N_{\text{FE}}(M)$ must be an even integer. The only even integer in the interval $(-2, 2)$ is $0$. Thus, $N_{\text{FE}}(M) = N_{\text{HPO}}(M) = M/2$.

\item \textbf{Case B (Odd $M$):} If $M$ is odd, then $\Delta N = M - 2N_{\text{FE}}(M)$ must be an odd integer. The only odd integers in the interval $(-2, 2)$ are $\pm 1$. Thus, $|N_{\text{FE}}(M) - N_{\text{HPO}}(M)| = 1$, implying the budget is split as $\lfloor M/2 \rfloor$ and $\lceil M/2 \rceil$.

\end{itemize}

In both cases, the adaptive selection rule minimizes the discrepancy between FE and HPO allocations, ensuring that:
$$N_{\text{FE}}(M), N_{\text{HPO}}(M) \in \{ \lfloor \frac{M}{2} \rfloor, \lceil \frac{M}{2} \rceil \}.$$
This completes the proof of the budget equilibrium.

\end{proof}

\section{Experimental Setup}

\subsection{Dataset Details}
\label{app:datasets}

This section provides the detailed specifications for the 28 benchmark datasets used in our experiments.

Table~\ref{tab:cls_datasets} lists the 19 classification datasets curated by Grinsztajn et al.~\cite{grinsztajn2022tree}, including their OpenML identifiers and basic dimensions.
\begin{table}[h]
\centering
\caption{Statistics for 19 classification datasets}
\label{tab:cls_datasets}
\small
\begin{tabular}{lccc}
\toprule
\textbf{Dataset} & \textbf{\# Samples} & \textbf{\# Features} & \textbf{OpenML ID} \\ \midrule
rl                              & 4,970   & 12 & 44160 \\
electricity                     & 38,474  & 8  & 44156 \\
compass                         & 16,644  & 17 & 44162 \\
wine                            & 2,554   & 11 & 44091 \\
house\_16H                      & 13,488  & 16 & 44123 \\
Magic                           & 13,376  & 10 & 44125 \\
Higgs                           & 940,160 & 24 & 44129 \\
jannis                          & 57,580  & 54 & 44131 \\
credit                          & 16,714  & 10 & 44089 \\
eye\_movements                  & 7,608   & 23 & 44157 \\
kddCup09                        & 5,032   & 45 & 44158 \\
road-safety                     & 111,762 & 32 & 44161 \\
bank-marketing                  & 10,578  & 7  & 44126 \\
phoneme                         & 3,172   & 5  & 44127 \\
covertype                       & 423,680 & 54 & 44159 \\
california                      & 20,634  & 8  & 44090 \\
kdd\_ipums\_la                  & 5,188   & 20 & 44124 \\
MiniBooNE                       & 72,998  & 50 & 44128 \\
pol                             & 10,082  & 26 & 44122 \\ \bottomrule
\end{tabular}
\end{table}

Table~\ref{tab:reg_datasets} details the regression datasets sourced from OpenML and Kaggle.

% 请确保在文档导言区添加：\usepackage{makecell}
\begin{table}[h]
\centering
\caption{Statistics for 9 regression datasets.}
\label{tab:reg_datasets}
\small
\setlength{\tabcolsep}{3pt} % 紧凑列间距
\begin{tabular}{lccc}
\toprule
\textbf{Dataset} & \textbf{Samples} & \textbf{Feat.} & \textbf{Source (ID / Link)} \\ \midrule
airfoil\_self\_noise & 1,503 & 6 & OpenML (44957) \\
cpu\_small           & 8,192 & 12 & OpenML (562) \\
diamonds            & 53,940 & 9 & OpenML (42225) \\
plasma\_retinol      & 315 & 13 & OpenML (511) \\
forest-fires        & 517 & 13 & OpenML (42363) \\
housing             & 20,640 & 9 & OpenML (43996) \\
bike                & 17,389 & 11 & OpenML (42712) \\

crab       & 3,893 & 8 & \makecell[c]{Kaggle \\ (crab-age-prediction)} \\
insurance  & 1,338 & 7 & \makecell[c]{Kaggle \\ (us-health-insurancedataset)} \\ \bottomrule
\end{tabular}
\end{table}
\subsection{Hyperparameter Search Spaces of Downstream Models}
\label{app:hpo_space}

This section describes the hyperparameter search spaces for the downstream models. For each task, we employ the SMAC optimizer to find the optimal set of hyperparameters within the budgets defined in Section~\ref{sec:exp_setup}.

For XGBoost, we adopt the hyperparameter search space used in Grinsztajn et al.~\cite{grinsztajn2022tree}. The detailed distributions and ranges are presented in Table~\ref{tab:xgb_hpo}.

\begin{table}[h]
\centering
\caption{XGBoost hyperparameter search space.}
\label{tab:xgb_hpo}
\small
\begin{tabular}{ll}
\toprule
\textbf{Parameter} & \textbf{Distribution / Range} \\ \midrule
Max depth & UniformInt [1, 11] \\
Num estimators & UniformInt [100, 6100, 200] \\
Min child weight & LogUniformInt [1, 1e2] \\
Subsample & Uniform [0.5, 1] \\
Learning rate & LogUniform [1e-5, 0.7] \\
Col sample by level & Uniform [0.5, 1] \\
Col sample by tree & Uniform [0.5, 1] \\
Gamma & LogUniform [1e-8, 7] \\
Lambda & LogUniform [1, 4] \\
Alpha & LogUniform [1e-8, 1e2] \\ \bottomrule
\end{tabular}
\end{table}

For MLP, we adopt the search space and architecture following Gorishniy et al.~\cite{mlp_gorishniy2021revisiting}. The model includes learning embeddings for categorical features and is trained with early stopping based on validation scores. The search space is detailed in Table~\ref{tab:mlp_hpo}.

\begin{table}[h]
\centering
\caption{MLP hyperparameter search space.}
\label{tab:mlp_hpo}
\small
\begin{tabular}{ll}
\toprule
\textbf{Parameter} & \textbf{Distribution / Range} \\ \midrule
Num layers & UniformInt [1, 8] \\
Layer size & UniformInt [16, 1024] \\
Dropout & Uniform [0, 0.5] \\
Learning rate & LogUniform [1e-5, 1e-2] \\
Category embedding size & UniformInt [64, 512] \\
Learning rate scheduler & \{True, False\} \\
Batch size & [256, 512, 1024] \\ \bottomrule
\end{tabular}
\end{table}

In the CASH scenario, the search space is expanded to include algorithm selection. The framework chooses among various learners (e.g., Random Forest, XGBoost, LightGBM, and MLP), with each learner associated with its respective hyperparameter search space.
As shown in Table~\ref{tab:search_space_for_algorithms}, the setting involves a total of 42 hyperparameters. This includes one high-level categorical parameter used for algorithm selection, while the specific learner subspaces contribute the remaining 41 hyperparameters (35 continuous and 6 categorical).
\begin{table}[h]
\caption{Search space for CASH. We distinguish categorical (cat) hyperparameters from numerical (cont) ones.}
\label{tab:search_space_for_algorithms}
\centering
\small
\begin{tabular}{lccc}
\toprule
\textbf{Type of Classifier / Regressor} & \textbf{\#$\lambda$} & \textbf{cat} & \textbf{cont} \\ \midrule
Random Forest       & 5  & 2 & 3  \\
Extra Trees         & 5  & 2 & 3  \\
Gradient Boosting   & 7  & 1 & 6  \\
MLP                 & 7  & 1 & 6  \\
LightGBM            & 7  & - & 7  \\
XGBoost             & 10  & - & 10  \\ \midrule
\textbf{Total (6 algos)} & \textbf{41} & \textbf{6} & \textbf{35} \\ \bottomrule
\end{tabular}
\end{table}
\subsection{Meta-Feature for FE Pipeline Characterization}
\label{app:meta_features}

To facilitate the collaborative optimization between Feature Engineering (FE) and Hyperparameter Optimization (HPO), \sys incorporates the meta-feature extraction framework from MindWare~\cite{rb_li2020efficient}, which consolidates and extends meta-features previously proposed in the meta-learning literature, including those from~\cite{meta1_pfahringer2000meta,meta2_yogatama2014efficient,meta3_bardenet2013collaborative}.

\noindent\textbf{Motivation and shared spirit}. 
In the original MindWare framework, meta-features are utilized for meta-learning to predict which algorithm (e.g., XGBoost, LightGBM) is likely to achieve the highest performance ceiling on a given dataset. Our approach shares a similar philosophy: we posit that the optimal hyperparameter configuration $\boldsymbol{\lambda}^*$ is inherently dependent on the current state of the dataset $\mathcal{X}'$. By extracting meta-features, we allow the HPO surrogate model to "perceive" the transformations made by the FE module, effectively conditioning the search space on the feature engineering trajectory.

\noindent\textbf{Implementation details}. We adopt the meta-feature suite provided in the MindWare repository\footnote{\url{https://github.com/PKU-DAIR/mindware/blob/master/mindware/components/meta_learning/meta_feature/meta_features.py}}. The extraction logic is categorized by task type:
\begin{itemize}[leftmargin=2em, topsep=3pt, partopsep=5pt, itemsep=5pt, parsep=0pt]
\item \textbf{Classification datasets}: 46 meta-features are extracted, covering statistical properties (e.g., skewness, kurtosis), information-theoretic measures (e.g., class entropy), etc.
\item \textbf{Regression datasets}: 33 meta-features are extracted, focusing on feature-target correlations and dataset dimensionality characteristics, etc.
\end{itemize}
Detailed definitions and the underlying source code for each metric are available in the referenced repository file.

\noindent\textbf{Orthogonality and Compatibility}. It is important to emphasize that the design of \sys is orthogonal to the specific choice of meta-features. While we utilize the MindWare suite due to its robustness and comprehensive coverage, our framework is fully compatible with any alternative meta-feature computation methodologies. This modularity ensures that as more advanced dataset characterization techniques emerge, they can be seamlessly integrated into the \sys pipeline to further enhance the collaborative optimization process.

\section{Additional Results and Analysis}

\subsection{Ablation on Different LLM Backbones}
\label{app:llm_ablation}

\begin{table}[htbp]
\centering
\caption{Test error ($\text{Mean} \pm \text{Std}$) and cost comparison averaged over 3 runs. Bold and underlined values denote the best and second-best results (lower is better). Bottom rows show the average rank and average token cost.}
\label{tab:model_comparison}
\setlength{\tabcolsep}{2pt}
\small
\begin{tabular}{lccc}
\toprule
\textbf{Dataset} & \textbf{Gemini-2.0-flash} & \textbf{Gemini-2.5-pro} & \textbf{GPT-5.2} \\
\midrule
pol                 & $\underline{\std{1.42}{0.12}}$ & \textbf{\std{1.39}{0.15}} & \std{1.42}{0.11} \\
wine                & \std{19.50}{0.62} & \underline{\std{19.42}{0.55}} & \textbf{\std{19.35}{0.68}} \\
airfoil\_self\_noise & \std{1.50}{0.10} & $\underline{\std{1.49}{0.13}}$ & \textbf{\std{1.47}{0.09}} \\
housing ($\times 10^9$) & \std{$1.89$}{0.01} & $\underline{\std{1.76}{0.01}}$ & \textbf{\std{1.72}{0.01}} \\
\midrule
\textbf{Avg. Rank}  & 2.75 & 1.75 & 1.50 \\
\textbf{Avg. Cost (\$)} & 0.088 & 1.515 & 2.123 \\
\bottomrule
\end{tabular}
\end{table}

This appendix investigates how the choice of LLM backbone affects the performance of \sys in joint FE+HPO tuning setting. We evaluate three models—GPT-5.2, Gemini-2.5-pro, and Gemini-2.0-flash—across four diverse datasets: two classification tasks (pol, wine) and two regression tasks (airfoil\_self\_noise, housing). As summarized in Table~\ref{tab:model_comparison}, all metrics are reported as averages over three independent runs to ensure statistical robustness.

Experimental results reveal a strong positive correlation between an LLM's reasoning depth and the quality of synthesized operations when exploring the FE search space conditioned on HPO. GPT-5.2 establishes the performance ceiling, achieving the best results on three out of four tasks and securing the top average rank of 1.50. This superiority suggests that advanced reasoning capabilities enable the model to propose more effective feature transformations while the HPO component simultaneously optimizes the model configuration. In contrast, Gemini-2.0-flash yields the lowest performance (Rank 2.75), highlighting the limitations of lightweight models in executing the sophisticated semantic reasoning required for optimal FE synthesis under varying HPO conditions.
Gemini-2.5-pro serves as a highly competitive mid-tier alternative, outperforming GPT-5.2 on the pol classification task and maintaining a second-best average rank of 1.75. 
This performance gradient suggests that the system's effectiveness scales directly with the model's ``intelligence ceiling.''

Cost analysis illustrates a sharp trade-off between performance and API expenditure. Gemini-2.0-flash is the most cost-effective option, with a mean task cost of $0.088$—approximately 1/24th that of GPT-5.2 ($2.123$). While it is numerically the weakest among the tested models in ranking, the performance gap remains remarkably small, which justifies its selection as our final backbone. This extreme efficiency enables a much broader search scope that can compensate for the lower per-proposal complexity through sheer volume and iteration. 
In summary, the performance of \sys is intrinsically linked to the reasoning capacity of the LLM backbone, but the ultimate selection of Gemini-2.0-flash reflects a strategic prioritization of cost-efficiency. As LLM research continues to deliver enhanced reasoning power at lower costs, the efficacy and scalability of \sys—leveraging high-efficiency backbones—are expected to increase proportionally.

\subsection{Effectiveness of Steerable Reasoning Directives}
\label{app:ee}

\begin{figure}[t!]
  \centering
  \includegraphics[width=0.9\linewidth]{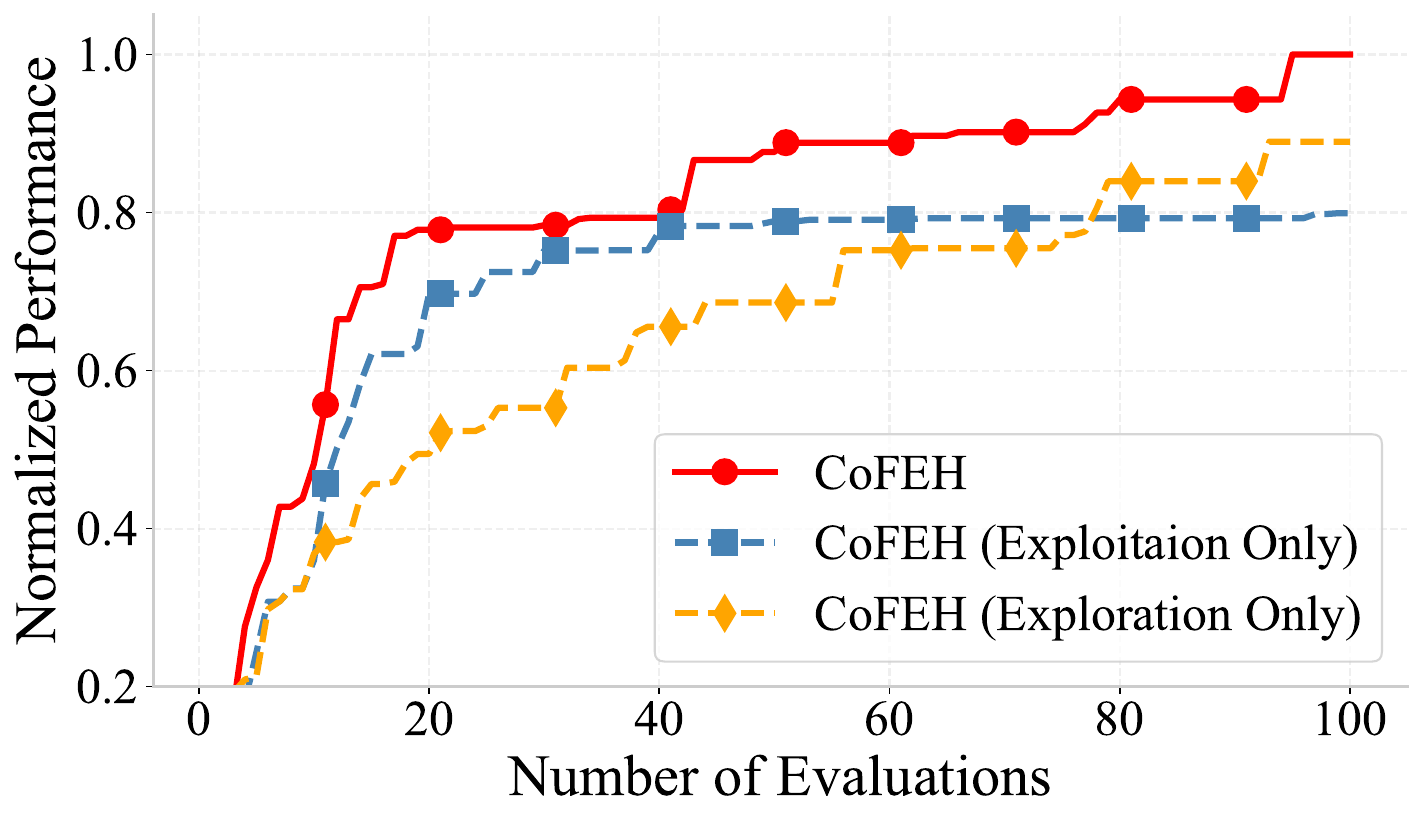}
  \caption{Abalation study of directives.}
  \label{fig:aba_directives}
\vspace{-1em}
\end{figure}

The exploration of the unbounded FE search space in \sys is explicitly driven by the interplay between the $\textsc{Exploration}$ and $\textsc{Exploitation}$ directives.
To evaluate the necessity of this dual-engine design, we conduct an ablation study by comparing the full \sys against two restricted variants:
(i) Exploration-only: The LLM performs four $\textsc{Exploration}$ expansions per node, relying solely on domain knowledge to probe the space without $\mathcal{M}_{\text{global}}$.
(ii) Exploitation-only: The LLM is restricted to four $\textsc{Exploitation}$ trials, forcing the FE process to refine ``elite experiences'' from $\mathcal{M}_{\text{global}}$ for novel transformations. To manage API expenditure while maintaining evaluation coverage across diverse task types, we select four representative datasets: two classification tasks (pol, wine) and two regression tasks (airfoil, housing). Fig.~\ref{fig:aba_directives} plots the average best-so-far validation performance across these tasks, where scores are min-max normalized for each dataset.

The results indicate that the Exploitation-only variant achieves rapid initial gains by effectively leveraging high-performing "elite experiences" from $\mathcal{M}_{\text{global}}$. However, it suffers from premature stagnation, as the search logic becomes increasingly confined to a narrow manifold of previously successful patterns, eventually trapping the process in local optima. Conversely, the Exploration-only variant exhibits significantly slower convergence; while it consistently probes novel regions using domain knowledge, the absence of iterative refinement leads to an inefficient and scattered discovery process.
\sys synthesizes the strengths of both engines to maintain a robust balance between breadth and depth throughout the search. This synergy is quantitatively substantiated by the final performance rankings: \sys achieves a dominant Average Test Rank of 1.0, significantly outperforming the Exploitation-only (2.25) and Exploration-only (2.75) variants. These findings confirm that the interplay between these two steerable directives is essential for navigating complex, unbounded search spaces efficiently under a constrained budget.
\subsection{Case Studies of Optimal FE Pipelines}
\label{app:FE_pipeline_example}

\begin{figure*}[t]
    \centering
    
    % 第一个子图
    \begin{subfigure}[b]{0.49\linewidth} % 宽度稍微留一点余量(0.6 -> 0.58)，防止换行
        \centering
        \includegraphics[width=\linewidth]{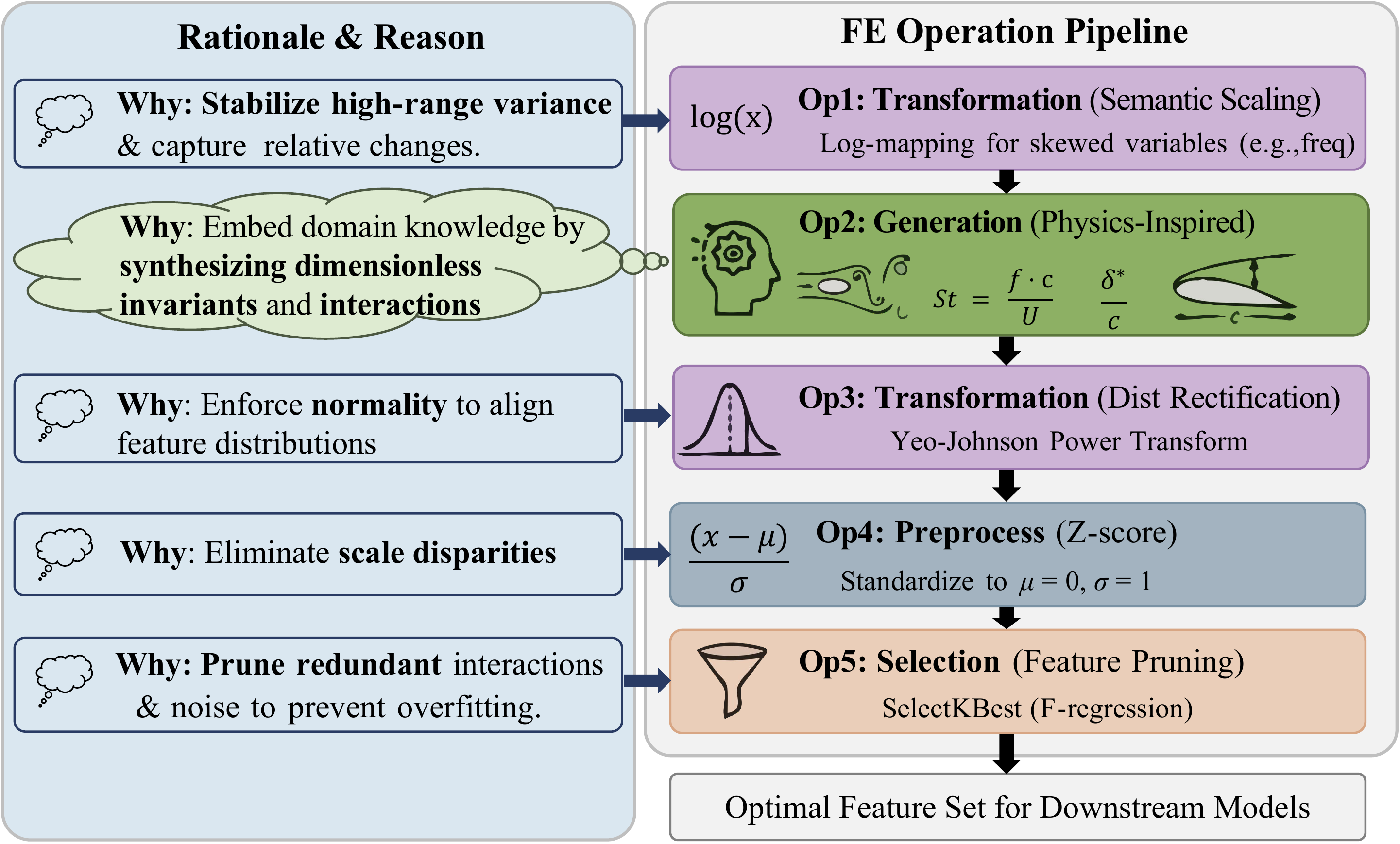}
        \caption{Optimal FE pipeline of \sys.}
        \label{fig:CoFEH_case_study}
    \end{subfigure}
    \hfill % 加上这个让两张图撑开，分布在左右两端
    % \hspace{3em}
    % 第二个子图
    \begin{subfigure}[b]{0.49\linewidth} % 宽度稍微留一点余量(0.4 -> 0.38)
        \centering
        \includegraphics[width=\linewidth]{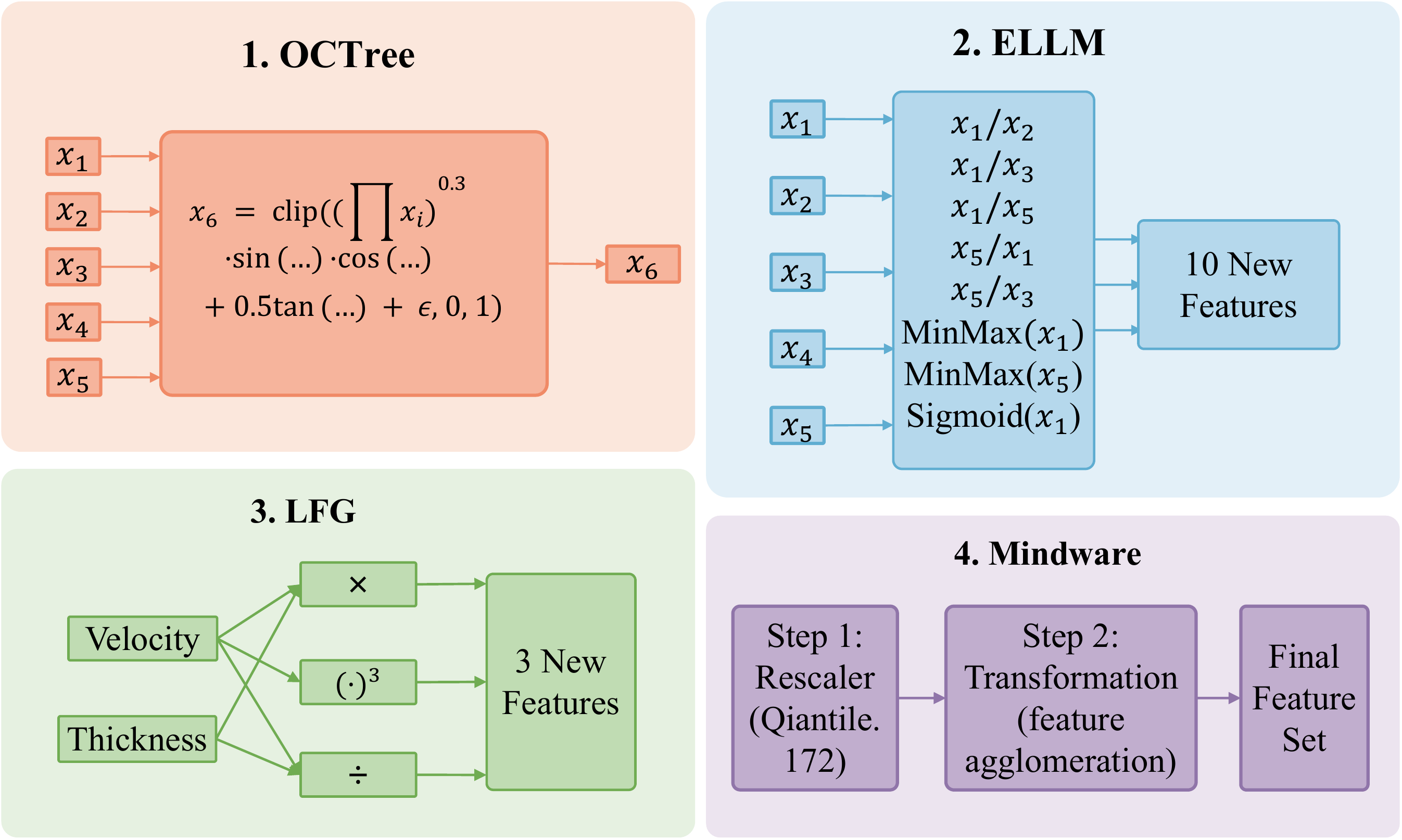}
        \caption{Optimal FE pipeline of baselines.}
        \label{fig:baseline_PP_case}
    \end{subfigure}
    
    \caption{Optimal FE pipeline discovered by all methods on ``airfoil\_self\_noise'' dataset.}
    \label{fig:case_study} % 记得给主图加个 label
% \vspace{-1em}
\end{figure*}

This appendix details the feature engineering (FE) pipelines discovered by our framework and various baselines for the NASA Airfoil Self-Noise dataset in the standalone FE scenario. The dataset, derived from anechoic wind tunnel tests of NACA 0012 airfoils, presents a challenging regression task: predicting scaled self-noise across varying chord lengths, wind speeds, and angles of attack.

\noindent \textbf{Pipeline of \sys}.
The pipeline discovered by \sys (Fig.~\ref{fig:CoFEH_case_study}) reflects a logical progression from raw data stabilization to the synthesis of physical representations. This process is structured into five distinct functional stages:

(i) \textbf{Operation 1: semantic scaling (transformation)}: The framework identifies that predictors such as frequency span several orders of magnitude. It autonomously applies a logarithmic transformation to stabilize high-range variance, effectively shifting the model’s focus toward relative spectral changes rather than absolute numerical values.

(ii) \textbf{Operation 2: physics-inspired feature generation}: Leveraging aerodynamic domain knowledge, \sys deduces that because the experimental conditions involve varying airfoil scales (chord length $c$) and flow velocities ($U$), the noise response is likely governed by dynamic similarity. It synthesizes a Strouhal-like number ($St = \frac{f \cdot c}{U}$), collapsing diverse experimental scales into a scale-invariant physical representation. The system further incorporates geometry-aware features by deriving trigonometric components ($\sin a, \cos a$) from the angle of attack. A notable emergent feature is the coupled interaction term ($St \cdot \sin a$), synthesized to model angle-modulated shedding effects—a sophisticated interaction.

(iii) \textbf{Operation 3: distribution rectification (transformation)}. Following symbolic discovery, the pipeline employs a Yeo-Johnson power transform. This step serves to rectify the distributions of the newly synthesized physical terms, enforcing Gaussian-like normality to align them statistically with the original feature space.

(iv) \textbf{Operation 4: scale standardization (Preprocess)}. The framework executes Z-score standardization ($z = \frac{x - \mu}{\sigma}$). This stage is critical for eliminating scale disparities between different features.

(v) \textbf{Operation 5: feature pruning (selection)}. The process concludes with selection via SelectKBest (F-regression). This stage distills the expanded feature bank into an optimal set, removing redundant noise to prevent overfitting and enhance model efficiency.

\sys distinguishes itself through a highly unconstrained workflow that autonomously orchestrates a fluid sequence of transformation $\rightarrow$ generation $\rightarrow$ transformation $\rightarrow$ preprocessing $\rightarrow$ selection. It also supports an essentially unbounded operation space, harmonizing foundational mathematical primitives (e.g., log-scaling) and advanced algorithmic optimizations (e.g., Yeo-Johnson transforms, F-regression) with autonomous, domain-driven synthesis.
On the other hand, Fig.~\ref{fig:baseline_PP_case} illustrates the optimal pipelines discovered by baselines, revealing a notable contrast to \sys:

\textbf{OCTree} pursues a monolithic symbolic synthesis approach, nesting multiple mathematical primitives into a single, high-order expression. While this captures complex non-linearities, the resulting single feature lacks a grounding in aerodynamic similarity laws (such as the Strouhal number), often leading to a representation that is difficult to generalize across different airfoil scales.

\textbf{ELLM and LFG} both rely on the expansion of mathematical primitives to generate multiple candidates. Crucially, the outputs of both methods are fundamentally the results of simple arithmetic operations. Furthermore, these methodologies are strictly confined to the feature generation phase, lacking a comprehensive pipeline to integrate transformation, scale standardization, or selection.

\textbf{Mindware} represents a purely two-step pipeline: a quantile-based Rescaler followed by a feature agglomeration Transformer. This result reflects a purely algorithmic paradigm that reconfigures the input space geometry through statistical motifs. Notably, it operates without any arithmetic generation, meaning it does not construct new symbolic features or physical interactions.

Finally, we consider \textbf{OpenFE}, which follows a two-stage automated feature generation. This methodology systematically explores thousands of candidate features by applying arithmetic primitives (e.g., +, -, $\times$, $\div$). to the original input space. Following this expansive generation phase, a selection process is employed, retaining 122 new features in the final dataset.

In summary, \sys distinguishes itself through its structural adaptability and operational diversity, transcending the functional constraints of all baselines through two core distinctions:
(i) \textbf{Workflow fluidity}: Unlike frameworks such as Mindware and OpenFE, which adhere to fixed, pre-defined sequences where stages are often mutually exclusive and restricted in depth, or baselines like OCTree, ELLM, and LFG, which are confined strictly to iterative feature generation regardless of pipeline length, \sys implements a truly unconstrained workflow. Although visualized as a five-stage pipeline in Fig.~\ref{fig:CoFEH_case_study}, each stage may represent a sophisticated reasoning layer where the LLM may execute multiple concurrent operations. This results in an underlying structural complexity that far exceeds what is shown.
(ii) \textbf{Synthesis of algorithmic and arithmetic operations}: \sys further bridges the gap between disparate operational paradigms. 
Mindware is purely algorithmic, relying on predefined algorithms (e.g., PCA, kernel expansions) to reconfigure the input space, while other baselines are restricted to symbolic arithmetic-operator synthesis (e.g., $+$, $-$, $\times$, $\div$).
In contrast, \sys achieves an organic integration of both worlds. It seamlessly blends domain-driven arithmetic discovery—such as the synthesis of scale-invariant physical parameters with advanced algorithmic optimizations like the Yeo-Johnson transform and F-regression. By navigating an unbounded operation space, \sys ensures that feature representations are both physically grounded and numerically optimized for downstream performance.

\begin{figure}[t!]
  \centering
  \includegraphics[width=0.8\linewidth]{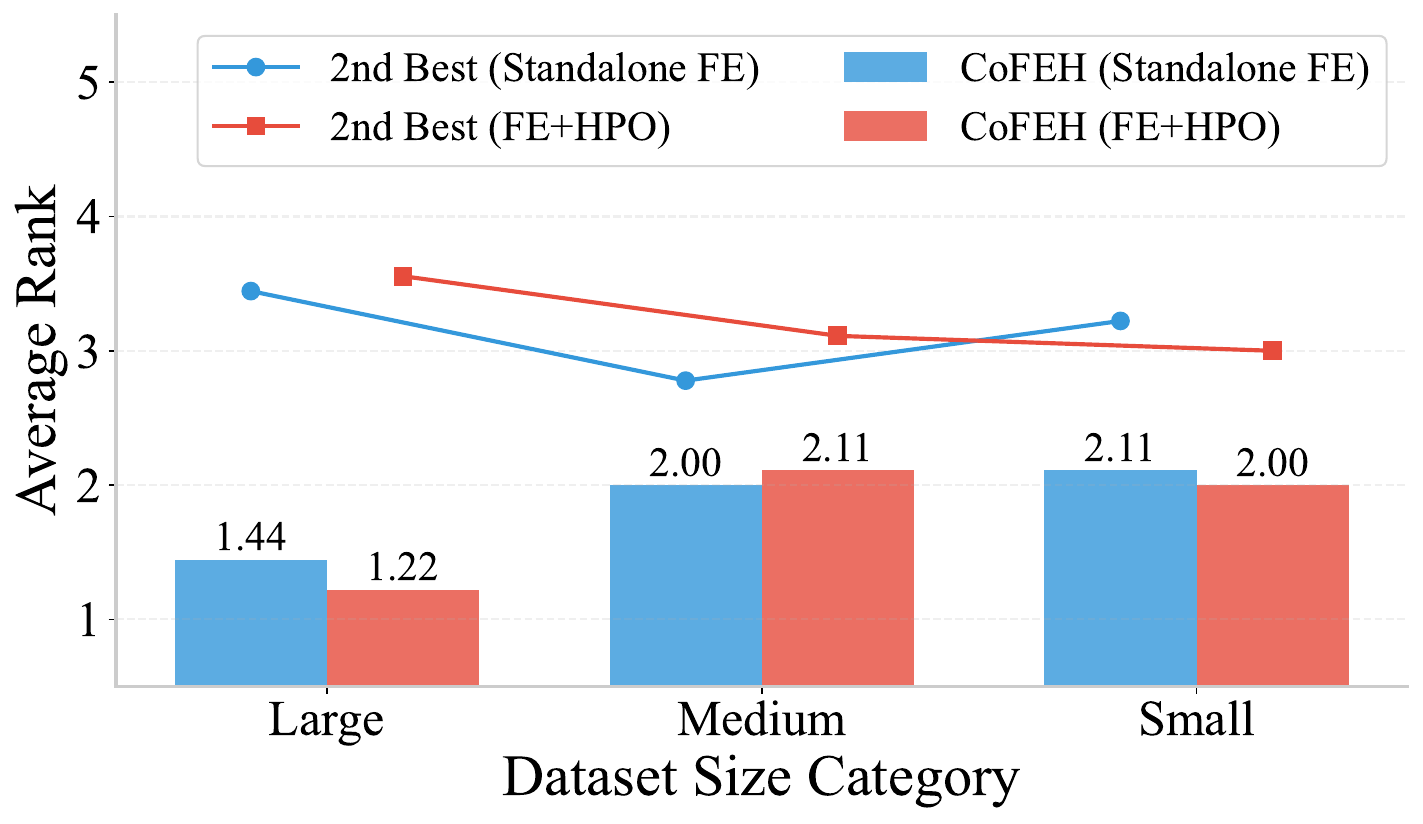}
    \vspace{-1em}
  \caption{Average rank by dataset size (samples $\times$ features).}
  \label{fig:dataset_analysis}
\end{figure}

\subsection{Impact of Dataset Characteristics}
\label{app:dataset_analysis}

In this section, we analyze the performance of \sys across different dataset scales to evaluate its robustness. We define dataset size as the product of samples and features, partitioning the 28 benchmark datasets equally into three groups: Small, Medium, and Large.
Fig.~\ref{fig:dataset_analysis}, which compares the average rank of \sys against the runner-up baseline,
(i) \sys consistently achieves the superior average rank across all categories in both standalone FE and joint FE+HPO scenarios; 
and (ii) the performance advantage of \sys becomes significantly more pronounced as the dataset scale increases, substantiating the robust scalability of our reasoning-driven approach, which effectively navigates the complex search spaces of large datasets where other methods often struggle.

\begin{figure*}[tb]
    \centering
    
    % 第一个子图
    \begin{subfigure}[b]{0.4\linewidth} % 宽度稍微留一点余量(0.6 -> 0.58)，防止换行
        \centering
        \includegraphics[width=\linewidth]{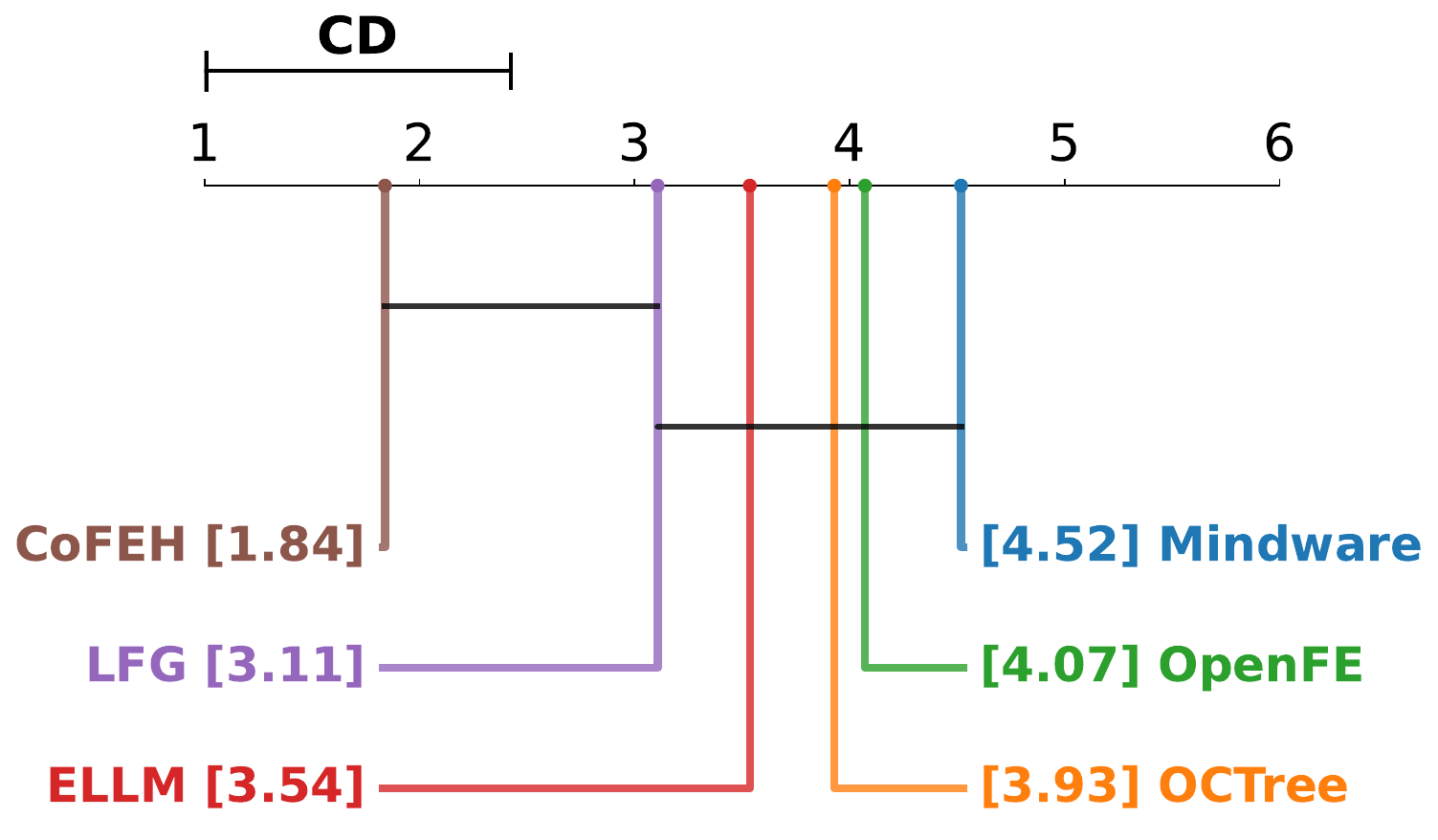}
        \caption{CD plot in standalone FE.}
        \label{fig:cd_fe}
    \end{subfigure}
    % \hfill % 加上这个让两张图撑开，分布在左右两端
    \hspace{5em}
    % 第二个子图
    \begin{subfigure}[b]{0.4\linewidth} % 宽度稍微留一点余量(0.4 -> 0.38)
        \centering
        \includegraphics[width=\linewidth]{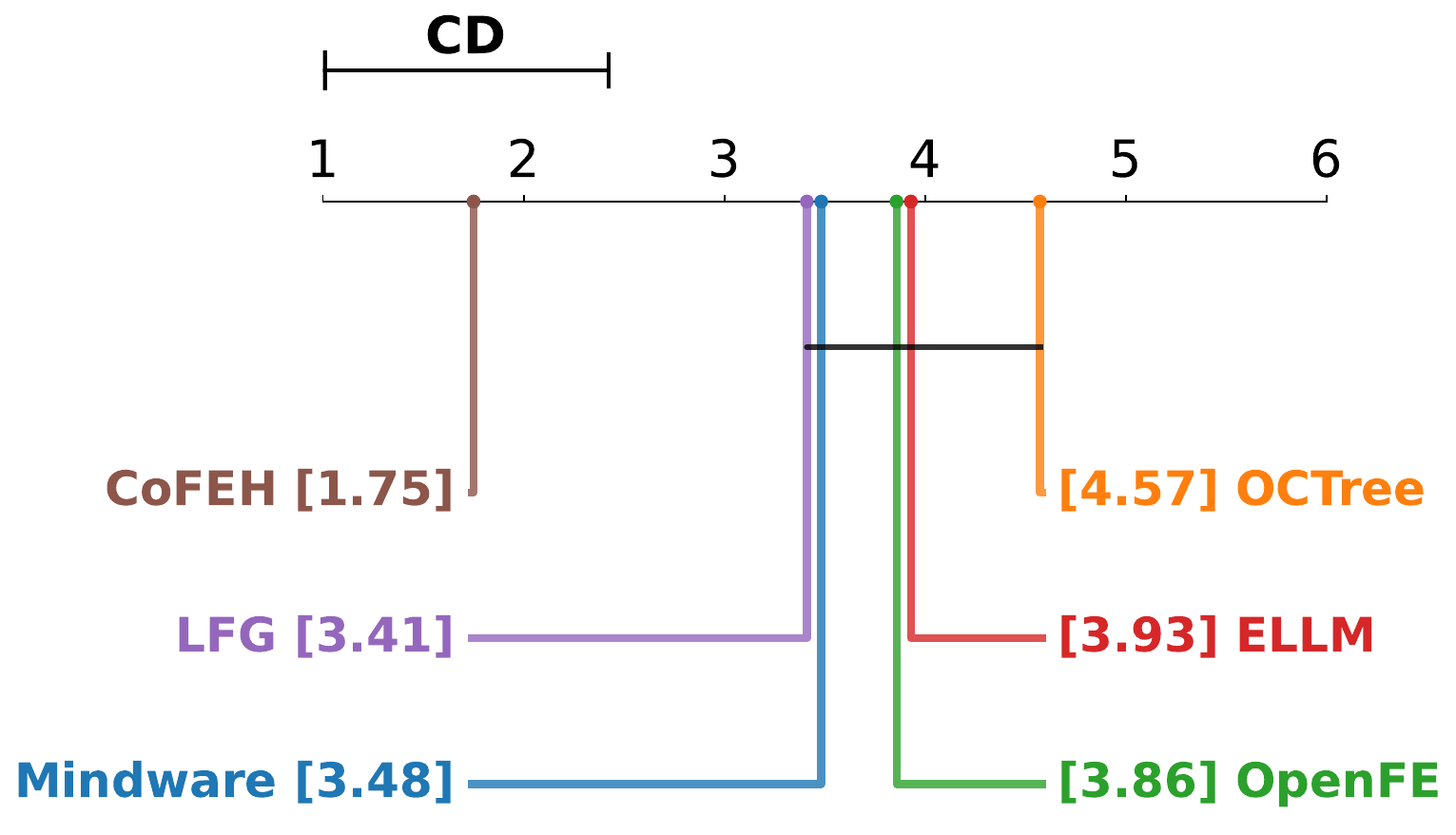}
        \caption{CD plot in joint FE+HPO.}
        \label{fig:cd_fehpo}
    \end{subfigure}
    
    \caption{CD plot of test rank with Nemenyi post-hoc test.}
    \label{fig:cd} % 记得给主图加个 label
\vspace{-1em}
\end{figure*}

\subsection{Statistical Significance Analysis}
\label{app:stat_analysis}

To rigorously evaluate the performance of \sys relative to the baselines, we conduct a statistical significance analysis based on the average ranks across all datasets. As illustrated in Fig.~\ref{fig:cd}, we employ the Friedman test followed by the Nemenyi post-hoc test to generate Critical Difference (CD) diagrams, where methods are connected by a horizontal line if their performance difference is not statistically significant ($\alpha = 0.05$).
We can conclude two observations: 
(i) Standalone FE scenario (Fig.~\ref{fig:cd_fe}): In the pure FE setting, \sys achieves the best average rank of 1.84. The CD plot reveals that \sys is statistically superior to ELLM, OCTree, OpenFE, and Mindware, as their average ranks fall outside the critical difference distance from \sys. While LFG shows competitive performance, \sys maintains a clear numerical lead, establishing its efficacy in FE pipeline optimization.
(ii) Joint FE+HPO scenario (Fig.~\ref{fig:cd_fehpo}): The superiority of \sys becomes even more pronounced in the end-to-end task, reaching an average rank of 1.75. Crucially, in this scenario, \sys is not connected to any baseline by a horizontal bar, indicating that its performance gain is statistically significant against all five competitors. This underscores that our collaborative optimization strategy (interleaved tuning) provides a distinct advantage over both sequential and purely unified baselines.
Overall, Fig.~\ref{fig:cd} provides robust statistical evidence that \sys consistently outperforms existing frameworks. Its ability to maintain significant dominance in the Joint FE+HPO scenario—the most representative of real-world AutoML workflows—confirms its status as a new state-of-the-art for automated pipeline optimization.

\subsection{Effectiveness of Meta-Feature for BO Surrogate Modeling}
\label{app:meta_fit}

In the collaborative optimization process of \sys, the performance $v$ is jointly determined by the FE pipeline state $T$ and the hyperparameter configuration $\boldsymbol{\lambda}$. Since FE pipelines generated by LLMs lack an explicit, continuous search space, we utilize meta-features $\phi(s)$ to vectorize the dataset state after transformations. This allows the surrogate model to map discrete tree nodes into a representative feature space. 
As detailed in Section~\ref{sec:bo_conditioned_on_fe}, we formalize the training dataset for the surrogate model $M$ as:
\begin{equation}
\mathcal{D}_{\text{BO}} = \left\{ \left( [\phi(s_i), \boldsymbol{\lambda}_{i,j}], v_{i,j} \right) \mid s_i \in \mathcal{V}_{\text{tree}}, \forall j \right\},
\end{equation}
where $\mathcal{V}{\text{tree}}$ is the set of nodes in the MCTS tree, and $\boldsymbol{\lambda}_{i,j}, v_{i,j}$ denote the $j$-th ML configuration and its corresponding score evaluated on node $s_i$. We employ Random Forest (RF) as the surrogate model for its efficiency and robust handling of categorical variables.

\noindent\textbf{The role of meta-features}. Without meta-features, the surrogate model is unable to perceive the impact of FE transformations, reducing the input to only hyperparameters $\boldsymbol{\lambda}$. In such a scenario, the training data becomes $\mathcal{D}_{\text{base}} = \{ (\boldsymbol{\lambda}_{i,j}, v_{i,j}) \}$, which introduces significant noise since identical hyperparameter configurations can yield vastly different performances across different FE pipelines.

\noindent\textbf{Evaluation of modeling capability}.
To verify the effectiveness of our joint modeling, we compare the predictive power of the RF surrogate with and without meta-features by collecting the 200 (FE pipeline, ML configuration) pairs generated by \sys in the Joint FE+HPO scenario. For each dataset, we generate out-of-fold (OOF) predictions via 5-fold cross-validation—where the model is trained on four subsets to predict the fifth—to ensure evaluation on unseen configurations and reflect true generalization. The Spearman rank correlation coefficient between the predicted and actual performance is then calculated as the primary metric. This is because the effectiveness of BO depends more on the surrogate's ability to correctly rank potential candidates than on absolute error minimization.

\begin{figure}[t!]
  \centering
  \includegraphics[width=0.9\linewidth]{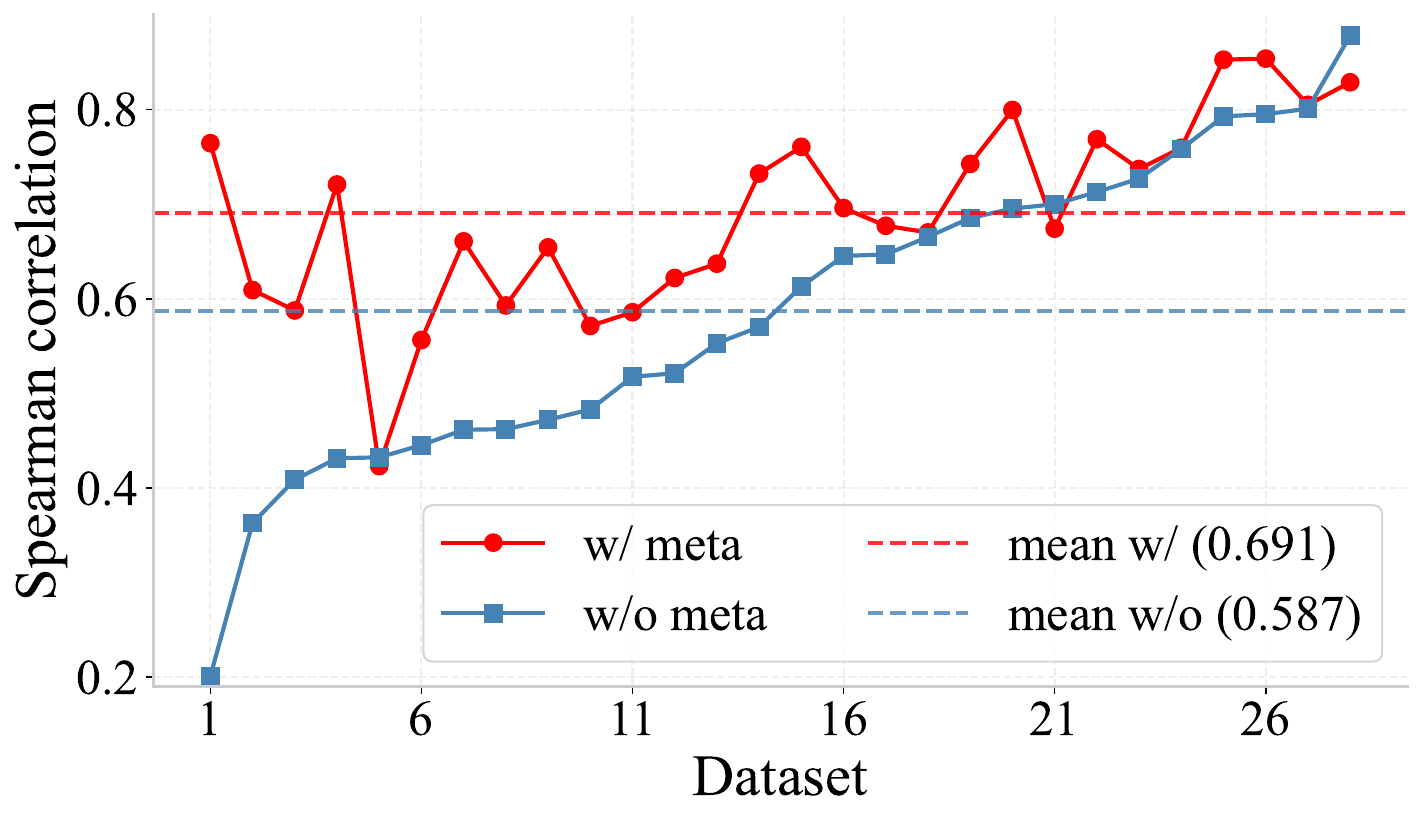}
  \caption{Spearman correlation between surrogate model predictions and true performance.}
  \label{fig:meta_fit}
\end{figure}

The comparative results of surrogate modeling are illustrated in Fig.~\ref{fig:meta_fit}, where datasets are sorted by their Spearman correlation in the "w/o meta-feature" setting. We observe several key trends:
(i) \textbf{Gains in FE-sensitive tasks}: A significant performance gap is evident on datasets where the baseline ($w/o$ meta) fits poorly. For instance, the most extreme case shows a leap from 0.20 to 0.76. This suggests that these tasks are highly sensitive to feature transformations; ignoring the FE pipeline state renders the surrogate model unable to capture the true performance landscape.
(ii) \textbf{Convergence in HPO-sensitive tasks}: On the right side of the plot, where the baseline already achieves high correlation, the improvement from meta-features is relatively marginal. In these scenarios, the model performance is likely dominated by hyperparameter configurations, allowing the surrogate to perform reasonably well even without explicit FE information.
(iii) \textbf{Robustness and potential}: Overall, our joint modeling approach is robust across diverse tasks. It provides substantial gains for FE-sensitive datasets while maintaining or slightly improving performance for HPO-sensitive ones, raising the mean Spearman correlation from 0.587 to 0.691. The few points where both settings yield low correlation may stem from inherent task complexity that exceeds the modeling capacity of the RF surrogate. Notably, as \sys is orthogonal to the specific choice of meta-features, the integration of higher-quality meta-features in the future could further elevate this performance ceiling.
\balance

\begin{figure}[t]
    \centering
    
    % 第一个子图
    \begin{subfigure}[b]{0.49\linewidth} % 宽度稍微留一点余量(0.6 -> 0.58)，防止换行
        \centering
        \includegraphics[width=\linewidth]{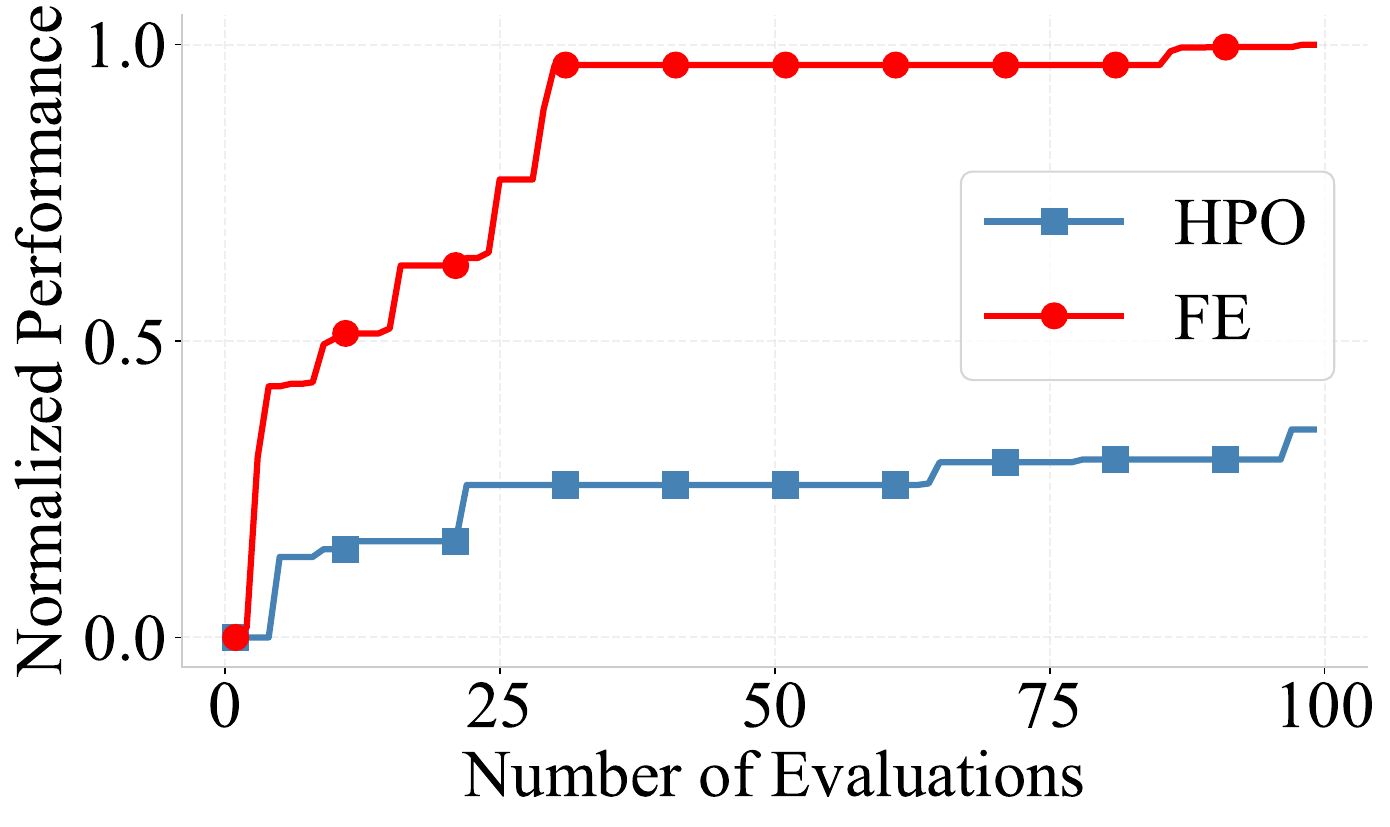}
        \caption{``Diamonds'': FE prop. = 0.7.}
        \label{fig:fe_hpo_curve_diamonds}
    \end{subfigure}
    \hfill % 加上这个让两张图撑开，分布在左右两端
    % \hspace{3em}
    % 第二个子图
    \begin{subfigure}[b]{0.49\linewidth} % 宽度稍微留一点余量(0.4 -> 0.38)
        \centering
        \includegraphics[width=\linewidth]{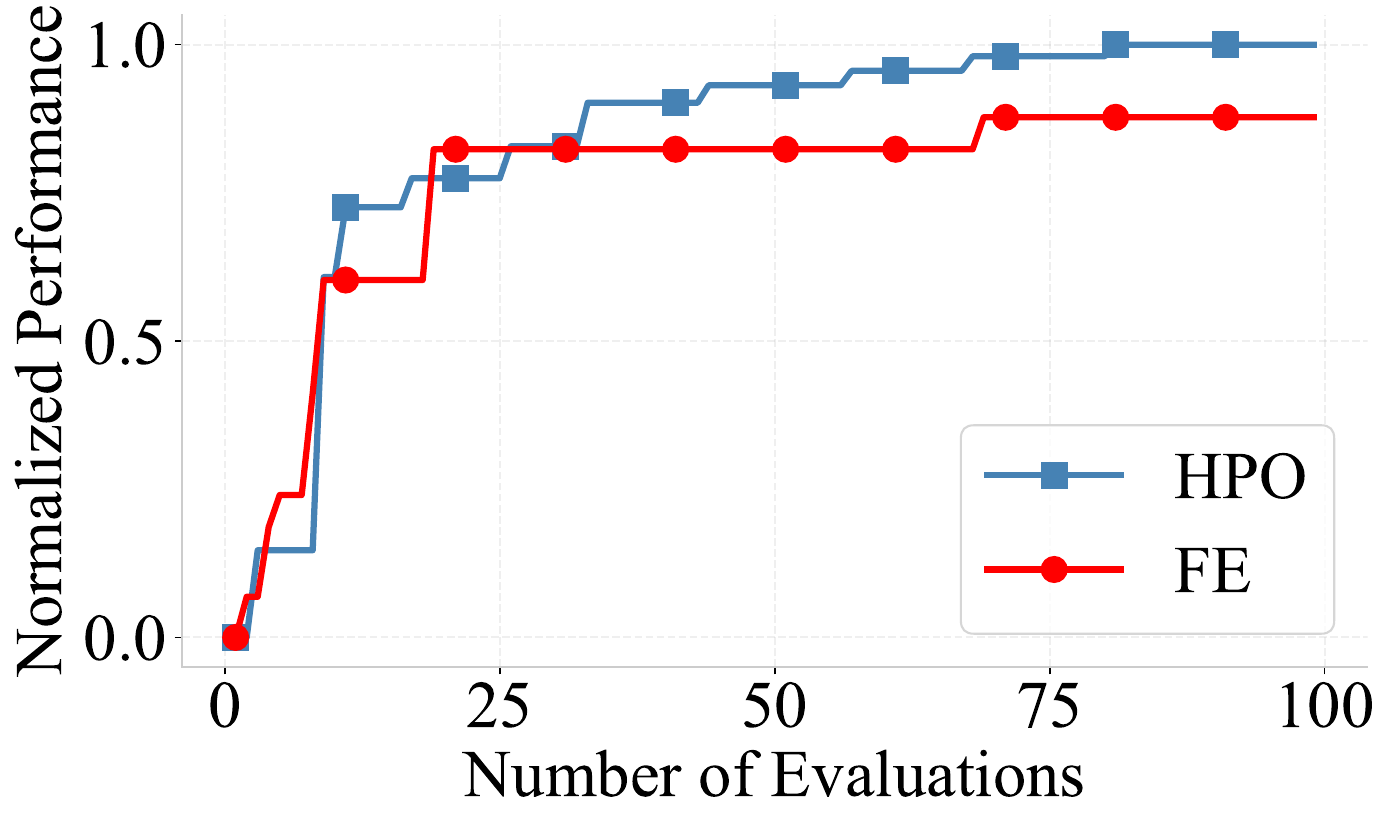}
        \caption{``Jannis'': FE prop. = 0.37.}
        \label{fig:fe_hpo_curve_jannis}
    \end{subfigure}
    
    \vspace{-0.5em}
    \caption{Comparison between standalone FE and HPO.}
    \label{fig:fe_hpo_curve} % 记得给主图加个 label
\end{figure}

\subsection{Case Study of FE vs. HPO}
\label{sec:fe_hpo_case}

To validate the rationality of the dynamic optimizer selector, we examine two representative cases from Fig.~\ref{fig:fe_ratio_hist} exhibiting contrasting resource allocations: one dataset (``diamonds'') with an FE proportion of 0.7 (FE-intensive) and another (``jannis'') with 0.37 (HPO-intensive). To verify if these allocations align with the datasets' inherent sensitivities, we conducted independent 100-iteration runs of pure FE and pure HPO on both tasks.

As illustrated in Fig.~\ref{fig:fe_hpo_curve}, the normalized validation curves reveal distinct performance drivers for each task:
(i) Diamonds (FE-Sensitive): The pure FE trajectory demonstrates a substantially higher performance ceiling and faster convergence compared to pure HPO. This confirms that the primary bottleneck for this task lies in the feature representation, validating the selector's decision to prioritize FE with a 0.7 budget share.
(ii) Jannis (HPO-Sensitive): Conversely, the pure HPO curve surpasses the FE results, indicating that hyperparameter configuration has a more profound impact on accuracy than further feature transformations. The selector correctly identifies this sensitivity, skewing the budget toward the Bayesian HPO module (FE ratio 0.37).
These results empirically substantiate that the dynamic optimizer selector effectively detects the marginal utility of FE versus HPO. By adaptively shifting focus based on the inherent properties of the dataset, \sys ensures the search budget is concentrated where it yields the highest performance gains.

\end{document}